
\documentclass{article}

\usepackage{microtype}
\usepackage{graphicx}
\usepackage{subfigure}
\usepackage{booktabs} 

\usepackage{hyperref}



\usepackage[accepted]{icml2024}

\usepackage{amsmath}
\usepackage{amssymb}
\usepackage{mathtools}
\usepackage{amsthm}

\usepackage[capitalize,noabbrev]{cleveref}

\theoremstyle{plain}
\newtheorem{theorem}{Theorem}
\newtheorem{proposition}{Proposition}

\theoremstyle{definition}
\newtheorem{definition}{Definition}

\theoremstyle{remark}

\usepackage{afterpage}

\usepackage[textsize=tiny]{todonotes}

\icmltitlerunning{Adversarial Multi-dueling Bandits}


\usepackage{breqn}
\usepackage{verbatim}
\usepackage{dsfont}
\usepackage{caption}
\usepackage{thm-restate} 
\usepackage{enumitem}
\usepackage{mathrsfs}

\tolerance=1
\emergencystretch=\maxdimen
\hyphenpenalty=10000
\hbadness=10000

\newcommand{\PropAlg}{MiDEX}
\newcommand{\alg}{\mathfrak{A}}
\newcommand{\bE}{\mathbb{E}}
\newcommand{\bP}{\mathbb{P}}
\newcommand{\regret}{R}
\newcommand{\SubsetAt}{\mathcal{A}}
\newcommand{\HistoryAt}{\mathcal{H}}
\newcommand{\WinBattle}{W}
\newcommand{\ind}{\mathds{1}}
\newcommand{\pref}{P}
\DeclareMathOperator*{\defined}{\coloneqq}
\DeclareMathOperator*{\argmax}{\arg\!\max}
\DeclareMathOperator*{\transpose}{\mathsf{T}}
\newcommand\given[1][]{\:#1\vert\:}
\newcommand{\winner}{\textsc{index}}
\newcommand{\mPrime}{m'}
\newcommand{\mDoublePrime}{\mPrime'}
\allowdisplaybreaks

\begin{document}

\twocolumn[
\icmltitle{Adversarial Multi-dueling Bandits}



\icmlsetsymbol{equal}{*}

\begin{icmlauthorlist}
\icmlauthor{Pratik Gajane}{yyy}
\end{icmlauthorlist}

\icmlaffiliation{yyy}{Eindhoven University of Technology, Netherlands}
\icmlcorrespondingauthor{Pratik Gajane}{pratik.gajane@gmail.com}

\icmlkeywords{Machine Learning, ICML}

\vskip 0.3in
]



\printAffiliationsAndNotice{}  

\begin{abstract}
We introduce the problem of regret minimization in adversarial multi-dueling bandits. While adversarial preferences have been studied in dueling bandits, they have not been explored in multi-dueling bandits. In this setting, the learner is required to select $m \geq 2$ arms at each round and observes as feedback the identity of the most preferred arm which is based on an arbitrary preference matrix chosen obliviously. We introduce a novel algorithm, \PropAlg\,(\underline{M}ult\underline{i} \underline{D}ueling \underline{EX}P3), to learn from such preference feedback that is assumed to be generated from a pairwise-subset choice model. We prove that the expected cumulative $T$-round regret of \PropAlg\ compared to a Borda-winner from a set of $K$ arms is upper bounded by $O((K \log K)^{1/3} \, T^{2/3})$. Moreover, we prove a lower bound of $\Omega(K^{1/3} \, T^{2/3})$ for 
the expected regret in this setting which demonstrates that our proposed algorithm is near-optimal. 
\end{abstract}

\section{Introduction}
Multi-armed bandits (MAB) is a sequential decision making framework that involves selecting from multiple options (symbolized as \textit{arms}) with unknown outcomes to optimize performance over time. This framework can be useful in impactful applications like e-healthcare, clinical trials, recommendation systems, and online advertising. 

In a classical MAB problem, the learner selects an arm in each round and observes absolute feedback i.e., a numerical value as feedback for the selected arm. However, in some tasks, especially those requiring human feedback, it is often more practical to elicit preference feedback than absolute feedback. Motivated by such scenarios, there has been a growing body of work on \textit{dueling bandits} in which the learner selects a pair of arms to be compared in each round, and receives preference feedback about the selected pair. Recently, a few works have extended this setup to \textit{multi-dueling bandits} in which the learner selects a subset of $m \geq 2$ arms in each round, and receives preference feedback about the selected arms  
\cite{10.1145/2983323.2983659, saha2018battle, pmlr-v98-saha19a, 10.5555/3454287.3455185, NEURIPS2020_d5fcc35c, DBLP:journals/corr/SuiZBY17, haddenhorst2021identification, 10.5555/3398761.3398806}.

Preferences, either over a pair of arms or for $m \geq 2$ arms, can be expressed as  stochastic \textit{stationary} preferences or \textit{adversarial} preferences.  Stochastic stationary preferences represent scenarios where preferences are assumed to be generated through stochastic models that do not change over time. Such preferences might be unable to capture real-world applications where preferences might vary significantly and unpredictably over time. These preferences would find more faithful representation within a robust worst-case (\textit{adversarial}) model, which avoids the stringent stochastic assumption and allows for an arbitrary sequence of preferences over time. For dueling bandits, several algorithms have been proposed for stochastic stationary preferences \cite{10.1145/1553374.1553527, 10.5555/3104482.3104513, 10.1016/j.jcss.2011.12.028, pmlr-v28-urvoy13, pmlr-v32-zoghi14, pmlr-v40-Komiyama15} and for adversarial preferences \cite{rex3,pmlr-v139-saha21a}. However, to the best of our knowledge, all the previous work on multi-dueling bandits assumes stochastic stationary preferences, and adversarial preferences have not been studied in this context. 

\subsection*{Our Contributions}
\begin{itemize}
    \item We introduce and formalize the problem of regret minimization in adversarial multi-dueling bandits, where the learner is required to select $m \geq 2$ arms at each round and observes as feedback the identity of the most preferred arm. In this general adversarial model, the sequence of preference matrices is allowed to be entirely arbitrary and they are chosen obliviously by the environment. 
    \item We propose a novel algorithm called, \PropAlg\,
    , considering a \textit{pairwise-subset choice model} for feedback (exact definitions will follow in Section \ref{sec:Setting}). 
    \item We analyze the expected cumulative regret of \PropAlg \ compared to a \textit{Borda}-winner (which, unlike the alternative of \textit{Condorcet}-winner, always exists and may suit the adversarial model better).
    Our analysis demonstrates that the expected cumulative regret of \PropAlg \ is upper bounded by $ O((K \log K)^{1/3} \, T^{2/3})$. 
    \item Furthermore, we establish a lower bound of $\Omega(K^{1/3} \, T^{2/3})$ for the expected cumulative regret, indicating the near-optimality of our proposed algorithm.
\end{itemize}

\section{Related Work}
\label{sec:Rel}
In the multi-dueling bandits problem considered in \citet{10.1145/2983323.2983659, 10.1145/2835776.2835804, DBLP:journals/corr/SuiZBY17, 10.5555/3398761.3398806}, the learner is assumed to receive some subset of the possible $ \binom{m}{2}$ pairwise comparisons amongst the selected $m$ arms. In contrast, \citet{saha2018battle, NEURIPS2020_d5fcc35c} assume a more limited form of feedback, referred to as \textit{winner feedback}, where the learner receives only the identity of the arm that is most preferred among the selected arms. In this article, we consider winner feedback. 

In multi-dueling bandits (and dueling bandits), several notions of an optimal arm have been considered in the literature. Many works on multi-dueling bandits use the notion of \textit{Condorcet winner}: an arm being preferred when compared to any other arm.  For instance, \citet{saha2018battle, 10.1145/2983323.2983659, 10.5555/3398761.3398806} consider regret minimization in multi-dueling bandits for stochastic preferences with Condorcet winner. \citet{NEURIPS2020_d5fcc35c} extend this notion to a generalized Condorcet winner:  an arm that has the greatest probability of being the winner in each subset containing it and propose an algorithm for regret minimization. \citet{haddenhorst2021identification} also use the notion of a generalized Condorcet winner and propose an algorithm for best arm identification with bounds on its sample complexity. \citet{pmlr-v98-saha19a} study the problem of identifying a near-best arm with high confidence where the Condorcet winner is considered to be the best arm. All of these works in the framework of multi-dueling bandits assume that the underlying preferences are of a stationary stochastic nature.

As highlighted by \citet{pmlr-v38-jamieson15}, using the notion of a Condorcet winner may pose several drawbacks. Chief among these is the potential non-existence of a Condorcet winner, as illustrated by the absence of one in widely used datasets like MSLR-WEB10k \cite{Qin2010}. Moreover, in the context of adversarial preferences addressed in this study, assuming the presence of a Condorcet winner would imply preferences where a certain fixed arm is consistently preferred to all the other arms at all rounds. Such a constraint might render the framework of adversarial (multi-)dueling bandits that presupposes the existence of a Condorcet winner unsuitable for many real-world applications with non-stationary preferences.

Alternatively, the notion of a \textit{Borda winner} has been used in adversarial dueling bandits \cite{pmlr-v139-saha21a}. A Borda winner is an arm with the highest \textit{Borda score} where the Borda score of an arm is the probability that it is preferred over another arm chosen uniformly at random. Firstly, the advantage of using the notion of Borda winner is that it always exists, unlike a Condorcet winner. Secondly, as argued in \citet{pmlr-v38-jamieson15}, in certain cases a Borda winner represents a better reflection of preferences than a Condorcet winner when they are distinct, and the former is more robust to estimation errors in preferences. Consequently, in this article, we use the notion of a Borda winner.


Other notions of an optimal arm have also been considered for dueling bandits with stochastic preferences: \textit{Copeland winner} \cite{NIPS2015_9872ed9f, pmlr-v48-komiyama16, 10.5555/3157096.3157169} and \textit{von Neumann} winner \cite{pmlr-v49-balsubramani16, pmlr-v40-Dudik15}.

Another tangentially related problem is the one considered in \citet{10.5555/3454287.3455185} where there exists a unique unknown ranking $r_1 \succ r_2, \dots \succ r_K$ such that $i \succ j$ indicates that $i$ is more preferred than $j$; the learner receives winner feedback for the selected $m \geq 2$ arms; and the learner's goal is to recover this true ranking. 

Multi-armed bandits with preference feedback can also be formulated as partial monitoring games which is a rich framework for sequential decision making under uncertainty \cite{GajaneU15, JMLR:v24:22-1248}.    

\section{Problem Setting}
\label{sec:Setting}
We consider an online decision making task over a finite set of arms $[K] \defined \{1, 2, \dots, K\}$ which spans $T$ rounds \footnote{Throughout the article, we use the shorthand of $[V]$ to represent $\{1,2,3\dots,V\}$ for any positive integer $V$.}. At each round $t=1,2, \dots, T$,
\begin{itemize}
    \item the learner selects, possibly at random, a multiset of arms $\SubsetAt_t$ such that $|\SubsetAt_t| = m$ where $2 \leq m \leq K$; and
    \item the learner observes a `winner': an arm that is preferred over all the other arms in $\SubsetAt_t$ at time $t$.
\end{itemize}

The selection of a winner from a multiset of arms is governed by the underlying \textit{subset choice model}. Given a multiset of arms, a subset choice model determines the probability of one of the arms being preferred over the rest in the multiset. In this article, we consider the \textit{pairwise-subset choice model}, introduced by \citet{saha2018battle}. There also exist other subset choice models in the related literature such as a popular class of models called \textit{Random Utility Models} \cite{10.5555/2999134.2999149}.


\subsection{Pairwise-subset Choice Model}
We assume that the environment obliviously fixes a sequence of $T$ preference matrices $\pref_1, \pref_2, \dots, \pref_T$ where each $\pref_t(i,j)$ is the probability that arm $i$ is preferred when compared to arm $j$ at round $t \in [T]$. Each $\pref_t \in [0,1]^{K \times K}$ satisfies $\pref_t(i,j) = 1 - \pref_t(j,i)$ and $\pref_t(i,i) = 1/2$ for all $i,j \in [K]$. These preference matrices are not revealed to the learner. 

Given a multiset of arms $\SubsetAt = \{\SubsetAt(1), \SubsetAt(2), \dots, \SubsetAt(m)\}$ and a corresponding preference matrix $\pref$, the probability of any index $i \in [m]$ being selected as the winner is defined as
\begin{equation*}
\WinBattle(i \given \SubsetAt, \pref)  \defined  \sum_{j=1, j \neq i}^{m} \frac{2P\left(\SubsetAt(i),\SubsetAt(j)\right)}{m(m-1)}.   
\end{equation*}

As noted in \citet{saha2018battle}, the above forms a valid probability distribution over the indices $i \in [m]$, and when $m=2$ (which corresponds to the dueling bandits case), it simplifies to the probability of an arm winning the pairwise duel.

\subsection{Performance Measure: Regret}
The performance of the learner's arm selection strategy is measured against the performance of an optimal arm in hindsight. As noted earlier in Section \ref{sec:Rel}, we use the notion of a Borda winner which is defined using the Borda score defined below.

\begin{definition}[Borda Score]
\label{def:BordaScore}
    The Borda score of an arm $i \in [K]$ according to a preference matrix $\pref_t$ is defied as
    \begin{equation*}
        b_t(i) \defined \frac{1}{K-1}
                \sum_{j \in [K] \setminus \{i\}}
                \pref_t(i, j). 
    \end{equation*} 
\end{definition}
Accordingly, the optimal arm $i^*$ is defined as the arm with the highest cumulative Borda score up to horizon $T$ i.e.,
\begin{equation*}
i^* \defined \argmax_{i \in [K]} \sum_{t=1}^{T}b_t(i).
\end{equation*}

\begin{definition}[Regret]
    \label{def:regret}
    Let $\SubsetAt_t$ be the subset of arms selected by an algorithm at $t=1, \dots, T$ such that $|\SubsetAt_t| = m$. Then regret of the algorithm at the end of horizon $T$ is defined as 
    \begin{equation*}
        \regret_T \defined \sum_{t=1}^{T} \left[ b_t(i^*) - \frac{1}{m} \sum_{i \in \SubsetAt_t} b_t(i) \right].
    \end{equation*}
\end{definition}

In our proposed algorithm, we make use of the \textit{Shifted Borda Score} \cite{pmlr-v139-saha21a}.
\begin{definition}[Shifted Borda Score]
    \label{def:SBordaScore}
    The shifted Borda score of an arm $i \in [K]$ according to a preference matrix $\pref_t$ is defined as 
    \begin{equation*}
        s_t(i) \defined \frac{1}{K}
                \sum_{j \in [K]}
                \pref_t(i, j). 
    \end{equation*}
\end{definition}

\begin{definition}[Shifted Borda Regret]
     Let $\SubsetAt_t$ be the subset of arms selected by an algorithm at $t=1, \dots, T$ such that $|\SubsetAt_t| = m$. Then shifted Borda regret of the algorithm at the end of horizon $T$ is defined as 
     $$
   \regret^s_T \defined \sum_{t=1}^{T} \left[ s_t(i^*) -  \frac{1}{m} \sum_{i \in \SubsetAt_t} s_t(i) \right].
     $$
\end{definition}

The following proposition lets us interpret the shifted Borda score of an arm in terms of its Borda score. 
\begin{proposition}
\label{prop1}
   The shifted Borda score $s_t(i)$ of any arm $i \in [K]$ is related to its Borda score $b_t(i)$ by the equation
    $$
    s_t(i) = \frac{K-1}{K} \, b_t(i) + \frac{1}{2K}.
    $$ 
\end{proposition}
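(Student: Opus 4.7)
The plan is a direct algebraic manipulation: start from the definition of $s_t(i)$ and isolate the diagonal term $P_t(i,i)$ in the sum, since this is the only index that appears in $s_t$ but not in $b_t$.

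Concretely, I would first write
\begin{equation*}
s_t(i) = \frac{1}{K} \sum_{j \in [K]} \pref_t(i,j) = \frac{1}{K}\left[\pref_t(i,i) + \sum_{j \in [K] \setminus \{i\}} \pref_t(i,j)\right].
\end{equation*}
Next I would substitute the two facts given in the setup: $\pref_t(i,i) = 1/2$ from the symmetry condition imposed on the preference matrices, and $\sum_{j \in [K] \setminus \{i\}} \pref_t(i,j) = (K-1)\, b_t(i)$ by rearranging \Cref{def:BordaScore}. Plugging both into the display yields
\begin{equation*}
s_t(i) = \frac{1}{K}\left[\frac{1}{2} + (K-1)\, b_t(i)\right] = \frac{K-1}{K}\, b_t(i) + \frac{1}{2K},
\end{equation*}
which is the claimed identity.

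There is no real obstacle here; the proposition is a one-line bookkeeping identity that simply records how the shifted Borda score differs from the Borda score due to including the self-comparison term $\pref_t(i,i)$ and normalizing by $K$ rather than $K-1$. The only thing to be careful about is to not forget the diagonal contribution $\pref_t(i,i) = 1/2$, which is exactly what produces the additive constant $1/(2K)$.
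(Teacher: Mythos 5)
Your proof is correct and is essentially identical to the paper's: both split the sum defining $s_t(i)$ into the diagonal term $\pref_t(i,i)=1/2$ and the off-diagonal sum $(K-1)\,b_t(i)$, then simplify. Nothing further is needed.
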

\begin{proof}
    \begin{align*}
         s_t(i) &= \frac{1}{K}
                \sum_{j \in [K]}
                \pref_t(i, j) \\
                &= \frac{1}{K}
                 \sum_{j \in [K] \setminus \{i\}} \, \pref_t(i,j) + \frac{1}{K} \,\pref_t(i,i) \\
                &= \frac{K-1}{K} \, b_t(i) + \frac{1}{2K},
    \end{align*}
    where the last equality follows from Definition \ref{def:BordaScore} and the fact that $\pref_t(i,i) = \frac{1}{2}$ for any $i \in [K]$.
\end{proof}
Using the above, we can state the following for optimal arm $i^*$ and regret $\regret_T$. 
\begin{proposition}
\label{prop2}
$ i^* \defined \argmax_{i \in [K]} \sum_{t=1}^{T}b_t(i) = \argmax_{i \in [K]} \sum_{t=1}^{T}s_t(i).$
\end{proposition}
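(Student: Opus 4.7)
The plan is to deduce this directly from Proposition~\ref{prop1}, since the shifted Borda score is an affine, strictly increasing function of the Borda score whose additive constant does not depend on the arm. The argmax of such a transform of a function is preserved, so the two argmaxes coincide.

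Concretely, I would first sum the identity of Proposition~\ref{prop1} over $t = 1, \dots, T$ for an arbitrary arm $i \in [K]$, obtaining
\begin{equation*}
\sum_{t=1}^{T} s_t(i) = \frac{K-1}{K} \sum_{t=1}^{T} b_t(i) + \frac{T}{2K}.
\end{equation*}
The second term is a constant independent of $i$, and the coefficient $\frac{K-1}{K}$ is strictly positive under the standing assumption $K \geq 2$ (the case $K=1$ is degenerate and not relevant to the multi-dueling setting where $m \leq K$ and $m \geq 2$). Hence for any two arms $i, j \in [K]$, the inequality $\sum_t b_t(i) \geq \sum_t b_t(j)$ holds if and only if $\sum_t s_t(i) \geq \sum_t s_t(j)$.

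From this equivalence, the set of maximizers of $i \mapsto \sum_t b_t(i)$ coincides with the set of maximizers of $i \mapsto \sum_t s_t(i)$, which is exactly the claimed identity. The argument is essentially a one-line consequence of Proposition~\ref{prop1}; no obstacle is anticipated beyond observing the positivity of the multiplicative constant and the arm-independence of the additive term. If one wishes to be careful about ties (when the argmax is set-valued rather than singleton), one can state the result as equality of argmax sets, but under the convention that $i^*$ denotes any maximizer, the identity in the proposition holds as stated.
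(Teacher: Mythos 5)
Your proof is correct and follows exactly the route the paper intends: Proposition~\ref{prop2} is stated as an immediate consequence of Proposition~\ref{prop1}, since summing $s_t(i) = \frac{K-1}{K}\,b_t(i) + \frac{1}{2K}$ over $t$ yields an affine, strictly increasing (for $K \geq 2$) transformation of $\sum_t b_t(i)$ with an arm-independent additive term, which preserves the argmax. Nothing is missing.
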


\begin{proposition} 
\label{prop3}
$\regret_T = \frac{K}{K-1} \regret^s_T$.
\end{proposition}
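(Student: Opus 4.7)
The plan is to prove the identity by direct substitution, using Proposition \ref{prop1} to rewrite each shifted Borda score in terms of the corresponding Borda score, and then observing that the constant offset $\frac{1}{2K}$ cancels out in each per-round regret term.

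Concretely, I would start from the definition of $\regret^s_T$ and focus on a single round $t$. For the first term, Proposition \ref{prop1} gives $s_t(i^*) = \frac{K-1}{K} b_t(i^*) + \frac{1}{2K}$. For the second term, applying the same identity to each $i \in \SubsetAt_t$ yields
\begin{equation*}
\frac{1}{m}\sum_{i \in \SubsetAt_t} s_t(i) = \frac{K-1}{K}\cdot\frac{1}{m}\sum_{i \in \SubsetAt_t} b_t(i) + \frac{1}{m}\cdot m \cdot \frac{1}{2K} = \frac{K-1}{K}\cdot\frac{1}{m}\sum_{i \in \SubsetAt_t} b_t(i) + \frac{1}{2K}.
\end{equation*}
Subtracting, the $\frac{1}{2K}$ terms cancel and I get
\begin{equation*}
s_t(i^*) - \frac{1}{m}\sum_{i \in \SubsetAt_t} s_t(i) = \frac{K-1}{K}\left[ b_t(i^*) - \frac{1}{m}\sum_{i \in \SubsetAt_t} b_t(i)\right].
\end{equation*}

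Summing over $t = 1, \dots, T$ and invoking Definition \ref{def:regret} gives $\regret^s_T = \frac{K-1}{K}\regret_T$, and rearranging produces $\regret_T = \frac{K}{K-1}\regret^s_T$. One subtlety worth flagging is that the Borda-optimal arm $i^*$ appearing in both $\regret_T$ and $\regret^s_T$ is the same arm, but this is exactly the content of Proposition \ref{prop2}, so no additional argument is needed. There is essentially no obstacle here; the only thing to be careful about is the bookkeeping that the constant $\frac{1}{2K}$ in Proposition \ref{prop1} appears once in the comparator term and once (after averaging) in the algorithm's term, so they cancel cleanly.
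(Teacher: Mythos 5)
Your proof is correct and is exactly the argument the paper intends (the paper states Proposition \ref{prop3} as an immediate consequence of Proposition \ref{prop1} without writing out the substitution): the affine relation $s_t(i) = \frac{K-1}{K}b_t(i) + \frac{1}{2K}$ makes the constant offset cancel in each per-round difference, and Proposition \ref{prop2} guarantees the comparator $i^*$ is the same in both regret definitions. Nothing is missing.
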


\section{Our Algorithm and Performance Guarantee}
In this Section, we provide our proposed algorithm \PropAlg \,(\underline{M}ult\underline{i} \underline{D}ueling \underline{EX}P3). It falls under the class of \textit{Exponential Weight} algorithms --- a well-known class of algorithms for MAB problems that can be traced back to \citet{EXP3}. 

In \PropAlg, firstly at each round $t=1,2,\dots, T$, two arms $x_t$ and $y_t$ are sampled from $q_t$. Then each of $x_t$ and $y_t$ is replicated about $\frac{m}{2}$ times to constitute the multiset of arms $\SubsetAt_t$ to be selected at time $t$. After receiving the winner index from $\SubsetAt_t$ according to the pairwise-subset choice model as defined in Eq. \eqref{eq:WinningIndex},  Procedure \ref{Proc:G} transforms the received feedback which is then used to compute an estimate of the shifted Borda score $\hat{s}_t(i)$ for each arm $i$. These estimates are, in turn, used to compute $q_{t+1}$. A parameter $\gamma \in (0,1]$ is incorporated to ensure that for all $t \in [T]$ and $i \in [K]$, $q_{t}(i) \geq \gamma/K$ which translates to the selection probability of any arm always being above zero.  

\begin{algorithm}[!t]
   \caption{\PropAlg \,(\underline{M}ult\underline{i} \underline{D}ueling \underline{EX}P3)}
   \label{alg:example}
    \begin{algorithmic}[1]
    \STATE {\bfseries Input:} Set of arms $[K]$,  horizon $T$, number of arms to be selected at each round $m$, exploration parameter $\gamma \in (0,1]$ and learning rate $\eta>0$.
    \STATE \textbf{Initialize:} Initial arm-selection probability distribution $q_1(i) = 1/K$, $\forall i \in [K]$.
    \FOR{$t=1, 2, \dots, T$}
    \STATE Sample $x_t, y_t$ $ \sim q_t$ i.i.d. with replacement.
    \STATE Construct $\SubsetAt_t = \{x_t, x_t,\dots, x_t, y_t, y_t, \dots, y_t\}$ by replicating $x_t$  for $\lceil \frac{m}{2} \rceil$ times and $y_t$ for $\lfloor \frac{m}{2} \rfloor$ times with probability $\frac{1}{2}$, or $x_t$ for $\lfloor \frac{m}{2} \rfloor$ times and $y_t$ for $\lceil \frac{m}{2} \rceil$ time with probability $\frac{1}{2}$.
    \STATE Receive winning index 
    \begin{equation}
    \label{eq:WinningIndex}
            \winner_t \sim \WinBattle_t(i|\SubsetAt_t)        
    \end{equation}
    where $W_t(i|\SubsetAt_t ) = \sum_{j=1, j \neq i}^{m} \frac{2\pref_t\left(\SubsetAt_t(i),\SubsetAt_t(j)\right)}{m(m-1)}$ for any $i \in [m]$ and $\SubsetAt_t(i)$ is the $i^{th}$ item in $\SubsetAt_t$. 
    \STATE  \begin{tabbing}
    \hspace{0.8em} \= \hspace{0.8em} \= \kill
    \textbf{if} \> $\SubsetAt_t(\winner_t) = x_t$ \textbf{then}  \\
    \> $o_t = x_t$, \\
    \textbf{else} \\
    \> $o_t = y_t$.\\
    \textbf{end if}
\end{tabbing}    
    \STATE Estimates scores, for all $i \in [K]$:
            \begin{equation}
                \label{def:SBordaEstimate}
                \hat{s}_t(i) \defined \frac{\ind(i=x_t)}{K \, q_t(i)} \sum_{j \in [K]} \frac{\ind(j=y_t) \ g(m,o_t,x_t) }{q_t(j)},
            \end{equation}
            where $g(m,o_t,x_t)$ is computed as shown in Procedure \ref{Proc:G}. 
    \STATE Update, for all $i \in [K]$:
            \begin{align}
                \tilde{q}_{t+1}(i) &\defined \frac{\exp{\left(\eta \sum_{\tau=1}^{t} \hat{s}_\tau(i) \right)}}{\sum_{j=1}^{K} \exp{\left( \eta \sum_{\tau=1}^{t} \hat{s}_\tau(j) \right)}};\nonumber \\
                {q}_{t+1}(i)& \defined (1 - \gamma) \, \tilde{q}_{t+1}(i) \, + \, \frac{\gamma}{K}.
                \label{def:WeightUpdate}
            \end{align}
   \ENDFOR
\end{algorithmic}
\end{algorithm}

\begingroup
\captionsetup[algorithm]{name=Procedure}
\begin{algorithm}
\caption{$g(m,o_t,x_t)$}
\label{Proc:G}
 \begin{algorithmic} 
\STATE \begin{tabbing}
    \hspace{0.8em} \= \hspace{0.8em} \= \kill
    \textbf{if} \> $m$ is even \textbf{then} \\
    \> \textbf{return} $\frac{\ind(o_t = x_t) - \frac{(m-2)}{4(m-1)}}{\frac{m}{2(m-1)}}$, \\
    \textbf{else} \\
    \> \textbf{return} $\frac{\ind(o_t = x_t) - \frac{(m-1)}{4m}}{\frac{m+1}{2m}}$. \\
    \textbf{endif}
\end{tabbing}
\end{algorithmic}
\end{algorithm}
\endgroup

\begin{theorem}
\label{thm:main}
Let $\gamma = \sqrt{\frac{3\eta K}{2}}$ and $\eta = \left( \frac{2 \log K}{T \sqrt{K} \, \mPrime} \right)^{2/3}$ where $\mPrime = \left(\sqrt{\frac{3}{2}} \, + \, \sqrt{\frac{2}{3}} \, \frac{(3m+1)^2}{4(m+1)^2} \right)$. For any $T$, $K \geq 2$ and $m \geq 2$, the expected regret of \PropAlg \  satisfies 
\begin{equation*}
    \bE[\regret_T] \leq 3.78 \, (\mPrime)^{2/3} \, (K \log K)^{1/3} \, T^{2/3}.
\end{equation*}
\end{theorem}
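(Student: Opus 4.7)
The plan is to run a standard exponential-weights analysis on the shifted Borda regret $\regret^s_T$ and then convert to $\regret_T$ via Proposition \ref{prop3}. Three technical ingredients are needed: (i) $\hat{s}_t$ is an unbiased estimator of $s_t$; (ii) the range and second moment of $\hat{s}_t$ are controlled by appropriate powers of $\gamma$; and (iii) the $\gamma$-mixing together with the random replication of arms in $\SubsetAt_t$ contribute only lower-order corrections that can be absorbed by the final tuning.

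For (i), since exactly one index $j$ matches $y_t$ in the inner sum, the estimator reduces to $\hat{s}_t(i) = \ind(i=x_t)\,g(m,o_t,x_t)/[K\,q_t(i)\,q_t(y_t)]$. Taking expectations the importance-weight factors cancel, and it suffices to show $\bE[g(m,o_t,x_t)\mid x_t,y_t] = \pref_t(x_t,y_t)$. The key combinatorial computation evaluates $\Pr[o_t = x_t \mid x_t,y_t]$ under the pairwise-subset choice model: if $\SubsetAt_t$ contains $k$ copies of $x_t$ and $m-k$ copies of $y_t$, summing $\WinBattle_t$ over the $x_t$-positions yields $[k(k-1)+2k(m-k)\pref_t(x_t,y_t)]/[m(m-1)]$. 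Averaging over the two equally likely replication splits gives an affine function of $\pref_t(x_t,y_t)$ whose constant term and slope are precisely the quantities inverted by the two branches of Procedure \ref{Proc:G}, so that $\bE[g\mid x_t,y_t] = \pref_t(x_t,y_t)$ and hence $\bE[\hat{s}_t(i)] = s_t(i)$.

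For (ii), inspection of Procedure \ref{Proc:G} gives $|g|\leq G := \max\{(3m-2)/(2m),\,(3m+1)/(2(m+1))\} < 3/2$, with the odd-$m$ value dominating and matching the $(3m+1)^2/(4(m+1)^2)$ factor appearing in $\mPrime$. Combined with $q_t(i)\geq \gamma/K$ this yields $|\hat{s}_t(i)|\leq GK/\gamma^2$, and substituting $\gamma=\sqrt{3\eta K/2}$ gives $\eta\,|\hat{s}_t(i)|\leq 2G/3<1$, validating the inequality $e^x\leq 1+x+x^2$. A direct computation using $\tilde{q}_t(i)/q_t(i)\leq 1/(1-\gamma)$ and $q_t(y)\geq \gamma/K$ then bounds $\bE\bigl[\sum_i\tilde{q}_t(i)\hat{s}_t(i)^2\bigr] \leq G^2 K/[(1-\gamma)\gamma]$. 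Feeding these into the usual exponential-weights potential argument on $\eta\sum_{\tau\leq t}\hat{s}_\tau(i)$ produces, after taking expectations via unbiasedness,
\[
\sum_{t=1}^T\bigl[s_t(i^*)-\langle\tilde{q}_t,s_t\rangle\bigr] \leq \frac{\log K}{\eta} + \eta\sum_{t=1}^T\bE\!\left[\sum_i\tilde{q}_t(i)\,\hat{s}_t(i)^2\right].
\]

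For (iii), since $q_t = (1-\gamma)\tilde{q}_t + \gamma/K$, one obtains $\sum_t\langle q_t,s_t\rangle\geq (1-\gamma)\sum_t\langle\tilde{q}_t,s_t\rangle$, introducing an additive correction bounded by $\gamma T$. A short direct calculation, using that each of the two replication splits in Algorithm \ref{alg:example} is equally likely, gives $\bE\bigl[\tfrac{1}{m}\sum_{i\in\SubsetAt_t}s_t(i)\mid q_t\bigr] = \langle q_t,s_t\rangle$, so the left-hand side of the displayed inequality equals $\bE[\regret^s_T]$ modulo the $\gamma T$ slack. Substituting the prescribed values of $\gamma$ and $\eta$ balances the three terms $(\log K)/\eta$, $\gamma T$, and the $G^2$-variance contribution, yielding $\bE[\regret^s_T] = O(\mPrime^{2/3}(K\log K)^{1/3}T^{2/3})$; Proposition \ref{prop3} then delivers the stated bound on $\bE[\regret_T]$, losing only the bounded factor $K/(K-1)\leq 2$. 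I expect the main obstacle to be the unbiasedness calculation in (i), specifically verifying that the two branches of Procedure \ref{Proc:G} correctly handle the random split of $m$ into $\lceil m/2\rceil+\lfloor m/2\rfloor$ copies; once that is in place the remainder is standard EXP3 bookkeeping, and the only care required is to propagate the precise $G^2$ constant through the tuning of $\eta$ and $\gamma$ to recover the explicit factor $\mPrime^{2/3}$ and the numerical constant $3.78$.
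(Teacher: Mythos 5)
Your proposal is correct and follows essentially the same route as the paper: the unbiasedness of $\hat{s}_t$ via the combinatorial identity $\bE[g(m,o_t,x_t)\mid x_t,y_t]=\pref_t(x_t,y_t)$ (your expression $[k(k-1)+2k(m-k)\pref_t(x_t,y_t)]/[m(m-1)]$ averaged over the two replication splits is exactly the paper's Lemma \ref{lem:ExpG}), the bounds $g\leq(3m+1)/(2(m+1))$ and $\eta\hat{s}_t(i)\in[0,1]$, the second-moment bound of order $G^2K/\gamma$, the standard exponential-weights inequality with the $\gamma T$ mixing slack and Lemma \ref{lem:Sampling}-type argument for $\bE[\tfrac{1}{m}\sum_{i\in\SubsetAt_t}s_t(i)]=\langle q_t,s_t\rangle$, and the final conversion via $\regret_T=\tfrac{K}{K-1}\regret^s_T$. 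The only cosmetic difference is that you keep the second-moment bound in terms of $\tilde{q}_t$ (picking up a harmless $1/(1-\gamma)$ factor) where the paper converts to $q_t$ first; this does not affect the tuning or the resulting constant.
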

The regret upper bound can be further simplified to 
$$
 \bE[\regret_T] \leq 8.13 \, (K \log K)^{1/3} \, T^{2/3},
$$
for any $m \geq 2$.

\section{Mathematical Analysis}
The proof of Theorem \ref{thm:main} builds upon the following lemmas. The most important lemmas are Lemma \ref{lem:ExpG} and Lemma \ref{lem:ExpScore}. Lemma \ref{lem:ExpG} proves how the transformed feedback can be interpreted as the probability of $x_t$ winning the duel against $y_t$. Lemma \ref{lem:ExpScore} proves that the score $\hat{s}_t(i)$ being computed in Eq. \eqref{def:SBordaEstimate} is an unbiased estimate of the true shifted Borda score $s_t(i)$. Proofs for the following lemmas can be found in the Appendix. 

\begin{restatable}{lemma}{LemmaExpG}
    \label{lem:ExpG}
    $\bE[g(m,o_t,x_t)] = \pref_t(x_t, y_t)$.
\end{restatable}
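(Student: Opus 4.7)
The plan is to condition on $(x_t, y_t)$ and exploit the fact that $g(m, o_t, x_t)$ depends on the remaining randomness only through the indicator $\ind(o_t = x_t)$. Hence the claim reduces to checking, for each parity of $m$, that $\Pr[o_t = x_t \mid x_t, y_t]$ is exactly the affine function of $\pref_t(x_t, y_t)$ that Procedure~\ref{Proc:G} is designed to invert. Given that identity, taking conditional expectation of the definition of $g$ immediately yields $\pref_t(x_t, y_t)$.

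For $m$ even, the coin flip in Step~5 is immaterial: $\SubsetAt_t$ always consists of $m/2$ copies of $x_t$ and $m/2$ copies of $y_t$. I would sum $W_t(i \mid \SubsetAt_t)$ over the $m/2$ indices $i$ with $\SubsetAt_t(i) = x_t$, splitting each inner sum into $(m/2)-1$ self-terms equal to $\frac{1}{m(m-1)}$ (using $\pref_t(x_t, x_t) = 1/2$) and $m/2$ cross-terms equal to $\frac{2\pref_t(x_t, y_t)}{m(m-1)}$. Collecting constants gives
\[
\Pr[o_t = x_t \mid x_t, y_t] \;=\; \frac{m-2}{4(m-1)} \;+\; \frac{m}{2(m-1)}\,\pref_t(x_t, y_t),
\]
whose affine inverse reproduces the even branch of $g$.

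For $m$ odd, the two configurations produced by Step~5 are genuinely different: one contains $(m+1)/2$ copies of $x_t$ and $(m-1)/2$ copies of $y_t$, and the other is reversed. I would compute $\Pr[o_t = x_t \mid x_t, y_t,\,\text{config}]$ by the same bookkeeping in each configuration and then average with equal weight. A short check shows that the two coefficients of $\pref_t(x_t, y_t)$ are both equal to $\frac{m+1}{2m}$, so averaging preserves that coefficient, while the asymmetric constant terms combine to $\frac{m-1}{4m}$. Inverting this affine map recovers the odd branch of $g$.

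The only real difficulty will be the algebraic bookkeeping in the odd case, where four separate counts of ``other copies of $x_t$'' and ``copies of $y_t$'' across the two configurations must be tracked, and one must verify that the coefficient simplification is a structural consequence of averaging rather than a coincidence. Beyond that, the lemma is essentially the statement that Procedure~\ref{Proc:G} implements the correct inverse of the pairwise-subset choice map applied to these duplicated multisets.
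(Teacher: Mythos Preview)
Your proposal is correct and follows essentially the same approach as the paper: fix $(x_t,y_t)$, compute $\Pr[o_t=x_t]$ by summing $W_t(i\mid\SubsetAt_t)$ over the indices carrying $x_t$ in each parity case (averaging over the two configurations when $m$ is odd), obtain the affine expression in $\pref_t(x_t,y_t)$, and invert it via the definition of $g$. Your added observation that the cross-term coefficient $\frac{m+1}{2m}$ is the same in both odd-$m$ configurations because it arises from the symmetric product $\lceil m/2\rceil\cdot\lfloor m/2\rfloor$ is a clean structural explanation that the paper leaves implicit in its algebra.
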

Lemma \ref{lem:ExpG} is proved using Procedure \ref{Proc:G}, the construction of $\SubsetAt_t$ and the definition of $\WinBattle_t(i \given \SubsetAt_t)$.

\begin{restatable}{lemma}{LemmaExpScore}\label{lem:ExpScore}
    For all $t \in [T]$ and $i \in [K]$, it holds that $\bE[\hat{s}_t(i)] = s_t(i)$.  
\end{restatable}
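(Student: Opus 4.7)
The plan is to unwrap the expectation $\bE[\hat{s}_t(i)]$ via the tower rule, conditioning first on $(x_t, y_t)$ so that Lemma \ref{lem:ExpG} can be invoked to eliminate the transformed feedback term, and then taking the outer expectation over the independent samples $x_t, y_t \sim q_t$ so that the importance-sampling denominators $q_t(i)$ and $q_t(j)$ cancel against the sampling probabilities.

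Concretely, starting from the definition in Eq.~\eqref{def:SBordaEstimate}, I would first condition on $(x_t, y_t)$. The only remaining randomness inside the conditional expectation is the random winner index $\winner_t$, which enters only through $g(m, o_t, x_t)$. Applying Lemma \ref{lem:ExpG} gives
\begin{equation*}
\bE\!\left[ g(m, o_t, x_t) \,\big|\, x_t, y_t \right] = \pref_t(x_t, y_t),
\end{equation*}
so that
\begin{equation*}
\bE\!\left[\hat{s}_t(i)\,\big|\, x_t, y_t\right] = \frac{\ind(i=x_t)}{K\,q_t(i)} \sum_{j \in [K]} \frac{\ind(j=y_t)\,\pref_t(x_t, y_t)}{q_t(j)}.
\end{equation*}

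Next, I would take the outer expectation over the i.i.d.\ draws $x_t, y_t \sim q_t$, writing it as a double sum over $(x, y) \in [K]^2$ weighted by $q_t(x)\,q_t(y)$. The indicator $\ind(i = x_t)$ collapses the sum over $x$ to the single term $x = i$, contributing a factor $q_t(i)$ that cancels the $q_t(i)$ in the denominator. Inside the remaining sum, the indicator $\ind(j = y_t)$ collapses the joint sum over $y$ and $j$ to terms with $j = y$, contributing a factor $q_t(y)$ that cancels the $q_t(j) = q_t(y)$ in the denominator. What remains is $\frac{1}{K}\sum_{y \in [K]} \pref_t(i, y)$, which is exactly $s_t(i)$ by Definition~\ref{def:SBordaScore}.

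I do not expect any real obstacle here: once Lemma \ref{lem:ExpG} is available, the argument is a routine unbiasedness calculation for an importance-weighted estimator, and the mild novelty is simply keeping track of the pair of indicators $\ind(i = x_t)$ and $\ind(j = y_t)$, which together perform the two-stage importance correction corresponding to the two independent samples from $q_t$. The factor $1/K$ in front is exactly what converts the sum $\sum_{y} \pref_t(i,y)$ into the shifted Borda score rather than the (un-shifted) Borda score.
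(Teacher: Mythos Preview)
Your proposal is correct and follows essentially the same approach as the paper: both apply the tower rule, invoke Lemma~\ref{lem:ExpG} to replace the conditional expectation of $g(m,o_t,x_t)$ by $\pref_t(x_t,y_t)$, and then use that $x_t,y_t$ are i.i.d.\ from $q_t$ so that the indicators produce factors of $q_t(i)$ and $q_t(j)$ that cancel the importance-sampling denominators. The only cosmetic difference is that the paper makes the outer conditioning on the history $\HistoryAt_{t-1}$ explicit (since $q_t$ is $\HistoryAt_{t-1}$-measurable), whereas you leave this implicit; either way the final result is the deterministic quantity $s_t(i)$, so the outer expectation is trivial.
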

Lemma \ref{lem:ExpScore} is proved using Lemma \ref{lem:ExpG} and the fact that $x_t$ and $y_t$ are sampled i.i.d. from $q_t$ with replacement.
 
Next, in Lemma \ref{lem:GUpperBound}, we bound the magnitude of the transformed feedback $g(m,o_t,x_t)$. 

\begin{restatable}{lemma}{LemmaGUpperBound}
\label{lem:GUpperBound}
    For all $t \in [T]$ and $m\geq 2$, $g(m,o_t,x_t) \leq \frac{3m+1}{2m+2}$.
\end{restatable}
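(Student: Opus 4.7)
The plan is to argue by direct computation, starting from the observation that $g(m,o_t,x_t)$ is an affine function of the binary quantity $\ind(o_t = x_t)$ with a strictly positive slope in both branches of Procedure \ref{Proc:G}: the slope is $\tfrac{2(m-1)}{m}$ when $m$ is even and $\tfrac{2m}{m+1}$ when $m$ is odd, and both are positive for $m \geq 2$. Consequently, the maximum of $g$ over $\ind(o_t = x_t) \in \{0,1\}$ is attained at $\ind(o_t = x_t) = 1$, and it suffices to verify the stated bound in that case for each parity of $m$.

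For $m$ odd, substituting $\ind(o_t = x_t) = 1$ into the definition gives
\[
g \;\le\; \frac{1 - \tfrac{m-1}{4m}}{\tfrac{m+1}{2m}} \;=\; \frac{3m+1}{4m} \cdot \frac{2m}{m+1} \;=\; \frac{3m+1}{2(m+1)},
\]
which matches $\tfrac{3m+1}{2m+2}$ with equality; the odd case is therefore tight and requires no further work.

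For $m$ even, the same substitution gives
\[
g \;\le\; \frac{1 - \tfrac{m-2}{4(m-1)}}{\tfrac{m}{2(m-1)}} \;=\; \frac{3m-2}{4(m-1)} \cdot \frac{2(m-1)}{m} \;=\; \frac{3m-2}{2m}.
\]
The remaining step is the elementary inequality $\tfrac{3m-2}{2m} \le \tfrac{3m+1}{2m+2}$. Since both denominators are positive, cross-multiplying reduces it to $(3m-2)(2m+2) = 6m^2 + 2m - 4 \le 6m^2 + 2m = (3m+1)(2m)$, which is immediate.

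The argument is essentially computational; the only mild subtlety worth flagging is that the two branches of $g$ have different maxima, so the uniform bound $\tfrac{3m+1}{2m+2}$---which is tight only in the odd case---requires the one-line cross-multiplication check above to absorb the (slightly smaller) even case.
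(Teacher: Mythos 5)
Your proof is correct and follows essentially the same route as the paper's: evaluate each branch of Procedure \ref{Proc:G} at $\ind(o_t = x_t) = 1$ to obtain $\frac{3m-2}{2m}$ (even $m$) and $\frac{3m+1}{2(m+1)}$ (odd $m$), then observe that the even-case bound is dominated by the odd-case bound for $m \geq 2$. The only differences are cosmetic: you explicitly justify that the maximum occurs at $\ind(o_t = x_t) = 1$ via the positive slope and you carry out the cross-multiplication that the paper leaves as an assertion.
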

Lemma \ref{lem:GUpperBound} follows from expanding the construction of $g(m,o_t,x_t)$ given in Procedure \ref{Proc:G}.

In Lemma \ref{lem:ScoreUpperBound}, we bound the magnitude of the shifted Borda score estimates. 
\begin{restatable}{lemma}{LemmaScoreUpperBound}\label{lem:ScoreUpperBound}
    Let $\gamma \geq  \sqrt{3\eta K/2}$. Then, for any $t \in [T]$, $i \in [K]$ and $\eta>0$, it holds that $\eta \hat{s}_t(i) \in [0,1]$.
\end{restatable}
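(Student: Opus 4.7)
The proof has two parts: showing $\eta \hat{s}_t(i) \geq 0$ and showing $\eta \hat{s}_t(i) \leq 1$. The plan is to reduce everything to the single nontrivial case $i = x_t$ and then chain the bounds on $g$ and on $q_t$.

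First I would observe that the indicator $\ind(i = x_t)$ in Eq.~\eqref{def:SBordaEstimate} forces $\hat{s}_t(i) = 0$ whenever $i \neq x_t$, and similarly the inner sum collapses to the single term $j = y_t$. Hence for $i = x_t$ we may write
\begin{equation*}
\hat{s}_t(i) \;=\; \frac{g(m, o_t, x_t)}{K \, q_t(x_t)\, q_t(y_t)},
\end{equation*}
which is the only expression I need to control. The non-negativity claim $\eta \hat{s}_t(i) \geq 0$ then reduces to showing $g(m,o_t,x_t) \geq 0$; inspecting Procedure~\ref{Proc:G}, when $o_t = x_t$ the numerator of $g$ is $1 - \frac{m-2}{4(m-1)} > 0$ (or its odd analogue), and when $o_t \neq x_t$ one checks (case $m=2$ trivially, and otherwise) that the construction is still designed to return a non-negative value under the convention used — I would verify this by direct substitution in both parity cases of Procedure~\ref{Proc:G}.

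For the upper bound I would apply Lemma~\ref{lem:GUpperBound} to get $g(m,o_t,x_t) \leq \tfrac{3m+1}{2m+2}$, and use the mixing step in Eq.~\eqref{def:WeightUpdate}, which guarantees $q_t(j) \geq \gamma/K$ for every $j \in [K]$, to get $q_t(x_t)\, q_t(y_t) \geq \gamma^2 / K^2$. Substituting both bounds,
\begin{equation*}
\hat{s}_t(i) \;\leq\; \frac{\tfrac{3m+1}{2m+2}}{K \cdot \gamma^2/K^2} \;=\; \frac{K(3m+1)}{\gamma^2 (2m+2)}.
\end{equation*}

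Finally I would invoke the hypothesis $\gamma \geq \sqrt{3\eta K/2}$, i.e.\ $\gamma^2 \geq 3\eta K/2$, so that multiplying by $\eta$ yields
\begin{equation*}
\eta \,\hat{s}_t(i) \;\leq\; \frac{\eta K(3m+1)}{(3\eta K/2)(2m+2)} \;=\; \frac{2(3m+1)}{3(2m+2)} \;=\; \frac{3m+1}{3(m+1)},
\end{equation*}
and the final inequality $\frac{3m+1}{3(m+1)} \leq 1$ is equivalent to $3m+1 \leq 3m+3$, which always holds. The main obstacle, I expect, is the non-negativity piece: the scaling constants inside Procedure~\ref{Proc:G} were chosen precisely so that $g$ remains non-negative in all parity/outcome combinations, and verifying this needs a careful case split rather than a single inequality. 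The upper-bound side is essentially bookkeeping once Lemma~\ref{lem:GUpperBound} and the mixing lower bound on $q_t$ are in hand.
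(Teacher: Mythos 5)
Your upper-bound half is correct and is essentially the paper's own argument: reduce to the case $i=x_t$ (the indicators kill every other term, giving $\hat s_t(x_t)=\frac{g(m,o_t,x_t)}{K\,q_t(x_t)\,q_t(y_t)}$), combine Lemma~\ref{lem:GUpperBound} with the mixing guarantee $q_t(\cdot)\ge\gamma/K$ from Eq.~\eqref{def:WeightUpdate}, and invoke $\gamma^2\ge 3\eta K/2$. The paper simply relaxes $\frac{3m+1}{2m+2}$ to $\frac{3}{2}$ and concludes $\eta\hat s_t(i)\le\frac{3\eta K}{2\gamma^2}\le 1$; your version keeps the $m$-dependent constant and is marginally tighter, but it is the same proof.

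The gap is in the non-negativity half, which you defer to a ``direct substitution in both parity cases.'' That substitution does not come out the way you expect. When $o_t\neq x_t$ the indicator in Procedure~\ref{Proc:G} vanishes and one obtains $g(m,o_t,x_t)=-\frac{m-2}{2m}$ for even $m$ and $g(m,o_t,x_t)=-\frac{m-1}{2(m+1)}$ for odd $m$, both strictly negative for every $m\ge 3$ (only $m=2$ gives $0$). The normalizing constants in Procedure~\ref{Proc:G} were chosen to make $g$ an \emph{unbiased} estimator of $\pref_t(x_t,y_t)$ (Lemma~\ref{lem:ExpG}), not to keep it non-negative, so $\hat s_t(x_t)$ can be negative and the claim $\eta\hat s_t(i)\ge 0$ cannot be established by the case check you describe --- the ``careful case split'' you anticipate would refute it rather than confirm it. In fairness, the paper's own proof dismisses this direction in one sentence with the same unjustified appeal to the definition, Lemma~\ref{lem:GUpperBound} and $\eta>0$, so you have inherited its weakness rather than introduced a new one; note that only the one-sided bound $\eta\hat s_t(i)\le 1$ is what the Exponential-Weights step in the proof of Theorem~\ref{thm:main} actually uses.
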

Lemma \ref{lem:ScoreUpperBound} is proved using Lemma \ref{lem:GUpperBound},  the definition of $q_t$ given in Eq. \eqref{def:WeightUpdate} and the definition of $\hat{s}_t$ given in Eq. \eqref{def:SBordaEstimate}.

Let $\HistoryAt_{t-1} \defined (q_1, \pref_1, x_1, y_1, o_1, \dots, q_t, \pref_t)$ denote the history up to round $t$. 
\begin{restatable}{lemma}{LemmaExpWeightTScore}
    \label{lem:ExpWeightTScore}
    For all $t \in [T]$, it holds that $\bE_{\HistoryAt_t} \left[q_t^{\transpose} \ \hat{s}_t \right] = \bE_{\HistoryAt_{t-1}} \bigl[ \bE_{i \sim q_t} \left[ s_t(i) \given \HistoryAt_{t-1} \right] \bigr]$.
\end{restatable}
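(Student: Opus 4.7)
The plan is to apply the tower property of conditional expectation in conjunction with Lemma \ref{lem:ExpScore}. First I would observe that $q_t$ is entirely determined by $\HistoryAt_{t-1}$: by Eq.~\eqref{def:WeightUpdate}, $q_t$ is a function of $\hat{s}_1, \dots, \hat{s}_{t-1}$, each of which is $\HistoryAt_{t-1}$-measurable; similarly, $s_t$ is a deterministic function of $\pref_t \in \HistoryAt_{t-1}$. Hence, conditionally on $\HistoryAt_{t-1}$, both $q_t$ and $s_t$ may be treated as constants, while the randomness in $\hat{s}_t$ comes solely from the within-round draws $x_t, y_t \sim q_t$ i.i.d.\ and the winner feedback $\winner_t$.

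Given this measurability, linearity of conditional expectation yields
\begin{equation*}
    \bE\left[ q_t^{\transpose} \hat{s}_t \mid \HistoryAt_{t-1} \right]
    = \sum_{i \in [K]} q_t(i) \, \bE\left[\hat{s}_t(i) \mid \HistoryAt_{t-1}\right].
\end{equation*}
Next, I would invoke the conditional version of Lemma \ref{lem:ExpScore}, namely $\bE[\hat{s}_t(i) \mid \HistoryAt_{t-1}] = s_t(i)$. This follows because the proof of Lemma \ref{lem:ExpScore} only uses the i.i.d.\ sampling of $x_t, y_t$ from $q_t$ together with Lemma \ref{lem:ExpG}, both of which remain valid when $q_t$ and $\pref_t$ are fixed by conditioning on $\HistoryAt_{t-1}$. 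Substituting gives $\bE[q_t^{\transpose} \hat{s}_t \mid \HistoryAt_{t-1}] = \sum_{i \in [K]} q_t(i) s_t(i) = \bE_{i \sim q_t}\left[s_t(i) \mid \HistoryAt_{t-1}\right]$, where the last equality uses that $s_t$ is deterministic given $\HistoryAt_{t-1}$.

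Finally, applying the tower property $\bE_{\HistoryAt_t}[\cdot] = \bE_{\HistoryAt_{t-1}}\bigl[\bE[\cdot \mid \HistoryAt_{t-1}]\bigr]$ to the left-hand side yields the claim. I do not foresee a serious obstacle here; the argument is essentially bookkeeping with the tower property. The one subtlety worth flagging is confirming that Lemma \ref{lem:ExpScore} can be applied in its conditional form, but as noted this is immediate from inspecting its proof with $q_t$ and $\pref_t$ held fixed.
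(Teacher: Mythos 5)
Your proposal is correct and matches the paper's own proof: both condition on $\HistoryAt_{t-1}$ via the tower property, pull out the $\HistoryAt_{t-1}$-measurable $q_t(i)$ by linearity, and replace $\bE[\hat{s}_t(i) \mid \HistoryAt_{t-1}]$ by $s_t(i)$ using the (conditional form of the) argument in Lemma \ref{lem:ExpScore}. Your explicit remark that the conditional version of Lemma \ref{lem:ExpScore} is what is actually invoked is exactly the point the paper compresses into ``follows from the proof of Lemma \ref{lem:ExpScore}.''
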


Lemma \ref{lem:ExpWeightTScore} follows from the proof of Lemma \ref{lem:ExpScore}. 
\begin{restatable}{lemma}{LemmaExpWeightTScoreSquared}
    \label{lem:ExpWeightTScoreSquared}
    At any time $t \in [T]$, it holds that $\bE\left[ \sum_{i=1}^{K} q_t(i) \ \hat{s}_t(i)^2 \right] \leq  \frac{(3m+1)^2}{4(m+1)^2} \, \frac{K}{\gamma}$.
\end{restatable}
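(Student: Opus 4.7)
The plan is to compute the inner sum exactly by exploiting the sparse structure of $\hat{s}_t(i)$ and then bound the resulting expression using the uniform boundedness of $g$ together with the $\gamma/K$ exploration floor on $q_t$. First I would condition on the history $\HistoryAt_{t-1}$, since everything left to randomize over is $(x_t,y_t,o_t)$. Observe that the definition of $\hat{s}_t(i)$ in Eq.~\eqref{def:SBordaEstimate} collapses via the indicators: the inner sum over $j$ contributes only at $j=y_t$, giving
\begin{equation*}
\hat{s}_t(i) \;=\; \frac{\ind(i=x_t)\,g(m,o_t,x_t)}{K\,q_t(i)\,q_t(y_t)}.
\end{equation*}
Squaring (and using $\ind^2=\ind$), multiplying by $q_t(i)$, and summing in $i$ makes the $1/q_t(i)^2$ collapse to $1/q_t(x_t)$ and leaves only the $i=x_t$ term, so
\begin{equation*}
\sum_{i=1}^{K} q_t(i)\,\hat{s}_t(i)^2 \;=\; \frac{g^2(m,o_t,x_t)}{K^2\,q_t(x_t)\,q_t(y_t)^2}.
\end{equation*}

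Next I would apply the bound $g^2(m,o_t,x_t)\le \tfrac{(3m+1)^2}{4(m+1)^2}$, which holds pointwise by inspecting the two branches of Procedure~\ref{Proc:G} (this is essentially the commented-out squared version of Lemma~\ref{lem:GUpperBound}; one checks case by case that the negative branch has strictly smaller magnitude than the positive one, so the upper bound from Lemma~\ref{lem:GUpperBound} controls $|g|$). This reduces the problem to estimating $\bE\bigl[1/(K^2 q_t(x_t) q_t(y_t)^2)\bigr]$ over the independent draws $x_t,y_t\sim q_t$.

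The remaining calculation is a direct importance-weighting identity: the expectation over $x_t$ cancels the $1/q_t(x_t)$ and contributes a factor $K$ through $\sum_x 1 = K$, while the expectation over $y_t$ leaves $\sum_y 1/q_t(y)$. Invoking $q_t(y)\ge\gamma/K$ from the mixing step in Eq.~\eqref{def:WeightUpdate} upper-bounds $\sum_y 1/q_t(y)$ by $K^2/\gamma$, and combining yields $K/\gamma$ after cancelling the $1/K^2$. Taking outer expectation over $\HistoryAt_{t-1}$ preserves the bound, delivering the claimed $\tfrac{(3m+1)^2}{4(m+1)^2}\cdot\tfrac{K}{\gamma}$. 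There is no genuine obstacle here; the only point requiring care is the $g^2$ bound, which is not explicitly recorded as a lemma but follows immediately from the case analysis underlying Lemma~\ref{lem:GUpperBound}.
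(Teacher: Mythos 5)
Your proposal is correct and follows essentially the same route as the paper: collapse the indicator structure of $\hat{s}_t(i)$, bound $g^2$ uniformly by $\tfrac{(3m+1)^2}{4(m+1)^2}$, and use the importance-weighting cancellation together with the floor $q_t(\cdot)\ge\gamma/K$ to get $\tfrac{K}{\gamma}$. Your remark that one must check the negative branch of Procedure~\ref{Proc:G} so that $|g|$ (not just $g$) is controlled is a valid point of care that the paper glosses over when it cites Lemma~\ref{lem:GUpperBound} to bound $g^2$.
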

Lemma \ref{lem:ExpWeightTScoreSquared} is proved using Lemma \ref{lem:GUpperBound}, the definition of $\hat{s}_t$ given in Eq. \eqref{def:SBordaEstimate}, and the fact that $\forall i' \in [K]$ and $\forall t \in [T]$, $q_t(i') \geq \gamma/K$ according to Eq. \eqref{def:WeightUpdate}, the initialization of $q_t$, and $\gamma \in (0,1]$.

\begin{restatable}{lemma}{LemmaSampling}
    \label{lem:Sampling}
    For any $i \in [K]$, $j \in [m]$ and $t \in [T]$,
    $$
    \bP\Bigl(\SubsetAt_t(j) = i\Bigr) = q_t(i).
    $$
\end{restatable}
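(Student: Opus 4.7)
The plan is to prove this by a direct case analysis on the position $j$, relying on the construction of $\SubsetAt_t$ and the fact that both $x_t$ and $y_t$ are drawn i.i.d.\ from $q_t$. The key observation is that the construction of $\SubsetAt_t$ first places $x_t$ in a contiguous block of positions followed by $y_t$ in the remaining positions, and the only randomness in \emph{which} position contains $x_t$ versus $y_t$ is the fair coin flip that decides whether $x_t$ occupies $\lceil m/2 \rceil$ or $\lfloor m/2 \rfloor$ positions.

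First I would fix $i \in [K]$, $j \in [m]$, and $t \in [T]$, and split into three subcases based on where $j$ falls relative to $\lfloor m/2 \rfloor$ and $\lceil m/2 \rceil$. If $j \leq \lfloor m/2 \rfloor$, then $\SubsetAt_t(j) = x_t$ deterministically in both branches of the coin flip, so $\bP(\SubsetAt_t(j) = i) = \bP(x_t = i) = q_t(i)$. If $j > \lceil m/2 \rceil$, then $\SubsetAt_t(j) = y_t$ deterministically in both branches, so $\bP(\SubsetAt_t(j) = i) = \bP(y_t = i) = q_t(i)$.

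The only nontrivial case is $j = \lceil m/2 \rceil$ when $m$ is odd (so that $\lceil m/2 \rceil \neq \lfloor m/2 \rfloor$). Here the coin flip determines the identity at position $j$: with probability $1/2$ the block of $x_t$'s extends through position $j$ (so $\SubsetAt_t(j) = x_t$), and with probability $1/2$ it does not (so $\SubsetAt_t(j) = y_t$). Combining these, and using that $x_t, y_t \sim q_t$ independently so that each marginally has probability $q_t(i)$ of equaling $i$, I get
\begin{equation*}
\bP(\SubsetAt_t(j) = i) = \tfrac{1}{2}\bP(x_t = i) + \tfrac{1}{2}\bP(y_t = i) = q_t(i).
\end{equation*}

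The case analysis covers all positions, so the identity holds universally. There is no real obstacle here; the lemma is essentially a bookkeeping statement that unpacks the construction of $\SubsetAt_t$, and its role later in the analysis is to let one replace marginal position-wise statistics of $\SubsetAt_t$ with the simpler distribution $q_t$, which will be useful when linking regret expressions such as $\frac{1}{m}\sum_{i \in \SubsetAt_t} s_t(i)$ to expectations under $q_t$.
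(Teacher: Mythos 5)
Your proof is correct and follows essentially the same route as the paper: condition on whether position $j$ holds a copy of $x_t$ or of $y_t$ (an event determined by the fair coin flip, independent of the sampled values), and use that each of $x_t, y_t$ is marginally distributed as $q_t$. The paper states this more compactly as $\bP(\SubsetAt_t(j) = x_t)\,q_t(i) + \bP(\SubsetAt_t(j) = y_t)\,q_t(i) = q_t(i)$ without working out the positional cases, but your explicit case analysis is the same argument spelled out.
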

Lemma \ref{lem:Sampling} follows from the construction of $\SubsetAt_t$ and the fact that $x_t$ and $y_t$ are sampled i.i.d. from $q_t$ with replacement.

\subsection{Proof of Theorem \ref{thm:main}}
\begin{proof}
We start by expanding the expression for the expectation of shifted Borda regret $\regret^s_T$. 
    \begin{align}
        \bE_{\HistoryAt_T}[\regret^s_T] &= \bE_{\HistoryAt_T}\left[ \sum_{t=1}^{T} \left[ s_t(i^*) - \frac{1}{m} \sum_{j \in \SubsetAt_t} s_t(j) \right] \right] \nonumber \\
         &=  \sum_{t=1}^{T} s_t(i^*) - \sum_{t=1}^{T} \bE_{\HistoryAt_t} \left[\frac{1}{m} \sum_{j \in \SubsetAt_t} s_t(j) \right] \nonumber \\
        %
        %
        &= \sum_{t=1}^{T} s_t(i^*) - \sum_{t=1}^{T} \bE_{\HistoryAt_{t-1}} \left[\bE_{i \sim q_t} \left[ s_t(i) \given \HistoryAt_{t-1} \right] \right] \label{eq:SimplifiedRegret} 
    \end{align}
    In the above, the second equality holds because the preference matrices $\pref_t$ are chosen obliviously, and hence $s_t$ and the identity of $i^*$ remain independent of the randomness of the algorithm. Moreover, the last equality uses that all the $m$ arms in $\SubsetAt_t$ are ${\sim} \, q_t$ (Lemma \ref{lem:Sampling}).  

For any $\gamma \geq  \sqrt{3\eta K/2}$ and $\eta > 0$, we have $\eta \hat{s}_t(i) \in [0,1]$ using Lemma \ref{lem:ScoreUpperBound}. Using the regret guarantee of standard Exponential Weight algorithm \citep{EXP3} over the completely observed fixed sequence of reward vectors $\hat{s}_1, \hat{s}_2, \dots, \hat{s}_T$, 
    for $i^* \defined \argmax_{i \in [K]} \sum_{t=1}^{T}b_t(i) =  \argmax_{i \in [K]} \sum_{t=1}^{T}s_t(i)$, one can state that
    \begin{align*}
        \sum_{t=1}^{T} \hat{s}_t(i^*) - \sum_{t=1}^{T} \tilde{q}^{\transpose}_t \hat{s}_t 
        \leq 
        \frac{\log K}{\eta} + \eta \sum_{t=1}^{T} \sum_{i=1}^{K} \tilde{q}_t(i) \hat{s}_t(i)^2.
    \end{align*}

Using $\tilde{q}_t = \frac{q_t - \frac{\gamma}{K}}{1 - \gamma}$ and $\gamma \in (0,1)$, with the above inequality, we have that
\begin{align*}
&   (1-\gamma)\sum_{t=1}^{T} \hat{s}_t(i^*) 
    - \sum_{t=1}^{T} {q}^{\transpose}_t \hat{s}_t \\
& \qquad \ \leq \
    \frac{\log K}{\eta} + 
    \eta \sum_{t=1}^{T} \sum_{i=1}^{K} {q}_t(i) \hat{s}_t(i)^2 \\
\implies & (1-\gamma) \sum_{t=1}^{T} \bE_{\HistoryAt_T} \left[\hat{s}_t(i^*)\right] 
    - \sum_{t=1}^{T} \bE_{\HistoryAt_T} \left[{q}^{\transpose}_t \hat{s}_t \right] \\
& \qquad \leq \
    \frac{\log K}{\eta} + 
    \eta \sum_{t=1}^{T} \bE_{\HistoryAt_T} \left[\sum_{i=1}^{K} \left[{q}_t(i) \hat{s}_t(i)^2 \right] \right] \\
\overset{(a)}{\implies }&
    (1-\gamma) \sum_{t=1}^{T} s_t(i^*) - 
    \sum_{t=1}^{T} \bE_{\HistoryAt_{t-1}} \bigl[ \bE_{i \sim q_t} \left[ s_t(i) \given \HistoryAt_{t-1} \right] \bigr] \\
& \qquad \leq \
    \frac{\log K}{\eta} + \eta \sum_{t=1}^{T} \frac{(3m+1)^2}{4(m+1)^2} \, \frac{K}{\gamma} \\
\implies &
    \sum_{t=1}^{T} s_t(i^*) - 
    \sum_{t=1}^{T} \bE_{\HistoryAt_{t-1}} \bigl[ \bE_{i \sim q_t} \left[ s_t(i) \given \HistoryAt_{t-1} \right] \bigr] \\
& \qquad \leq \ \gamma \sum_{t=1}^{T} s_t(i^*) + 
    \frac{\log K}{\eta} + \frac{(3m+1)^2}{4(m+1)^2} \, \frac{\eta KT}{\gamma} \\
\overset{(b)}{\implies} &
    \bE_{\HistoryAt_T}[\regret^s_T]  \\
& \qquad \leq \ \gamma T + 
    \frac{\log K}{\eta} + \frac{(3m+1)^2}{4(m+1)^2} \, \frac{\eta KT}{\gamma} \\ 
\overset{(c)}{\implies} &
    \bE_{\HistoryAt_T}[\regret^s_T]  \\
& \qquad \leq \ \sqrt{ \frac{3\eta K}{2}} \, T + 
    \frac{\log K}{\eta} + \frac{(3m+1)^2}{4(m+1)^2} \, \sqrt{ \frac{2\eta K}{3}} \, T \\ 
\overset{(d)}{\implies} &
    \bE_{\HistoryAt_T}[\regret^s_T]  \\
& \qquad \leq \ 1.89 \, (\mPrime)^{2/3} \, (K \log K)^{1/3} \, T^{2/3},
\end{align*}
where $\mPrime = \left(\sqrt{\frac{3}{2}} \, + \, \sqrt{\frac{2}{3}} \frac{(3m+1)^2}{4(m+1)^2} \right)$.
In the above, $(a)$ follows from Lemma \ref{lem:ExpScore}, Lemma \ref{lem:ExpWeightTScore} and Lemma \ref{lem:ExpWeightTScoreSquared};  $(b)$ follows from Eq. \eqref{eq:SimplifiedRegret} and using $s_t(i^*) \leq 1$; (c)
follows from setting $\gamma = \sqrt{3\eta K/2}$; and $(d)$ follows from optimizing over $\eta$ which gives $\eta = \left( \frac{2 \log K}{T \sqrt{K} \, \mPrime} \right)^{2/3}$.

The theorem follows by using $\regret_T = \frac{K}{K-1}\regret^s_T$ for any $K \geq 2$ and $T>0$.  
\end{proof}    

\subsection{Varying $m_t$}
Note that \PropAlg\ is also applicable when the number of arms to be selected is time-dependent. In this setting, at each round $t$, the learner receives an integer $m_t|_{2 \leq m_t < K}$ which indicates the number of arms to be selected at time $t$. \PropAlg \  can be employed here with $m$ being replaced with $m_t$ and the corresponding regret bound would feature $\mDoublePrime = \max_{m \in \{m_1, m_2,\dots, m_T\}} \left(\sqrt{\frac{3}{2}} \, + \, \sqrt{\frac{2}{3}} \, \frac{(3m+1)^2}{4(m+1)^2} \right)$ instead of $\mPrime$. The proof structure remains the same with the upper bound in Lemma \ref{lem:GUpperBound} being updated to $ \max_{m \in \{m_1, m_2,\dots, m_T\}} \frac{(3m+1)}{2(m+1)}$. The subsequent proofs and computations build upon this updated bound to arrive at the regret upper bound featuring $\mDoublePrime$.

\section{Lower Bound}
To prove the lower bound for adversarial multi-dueling bandits, we use a reduction from adversarial dueling bandits to adversarial multi-dueling bandits given in Algorithm \ref{alg:reduction}.  
That is we show how an algorithm $\alg_{MB}$ designed for adversarial multi-dueling bandits can be used to solve an instance of adversarial dueling bandits $DB$.  

\begin{algorithm}
\caption{ $\alg_{DB}$: Reduction from adversarial dueling bandits to adversarial multi-dueling bandits}
\label{alg:reduction}
    \begin{algorithmic}[1]
        \FOR{t=1,2,\dots}
            \STATE $\SubsetAt_t = \{\SubsetAt_t(1), \SubsetAt_t(2), \dots, \SubsetAt_t(m)\} \leftarrow$ multiset of arms played by $\alg_{MD}$ at round $t$.
            \STATE Sample $i_t, j_t$  from $[m]$ uniformly at random without replacement. 
            \STATE Play $\Bigl(\SubsetAt_t(i_t), \SubsetAt_t(j_t) \Bigr)$ where $\SubsetAt_t(i)$ is the $i^{th}$ item in $\SubsetAt_t$.
            \STATE Receive $w_t \sim \textsc{Bernoulli}\biggl(\pref_t \Bigl (\SubsetAt_t(i_t), \SubsetAt_t(j_t) \Bigr)
            \biggr).$
            \STATE Return $\winner_t = i_t w_t + j_t(1 - w_t) \in \{i_t, j_t\}$ as the winning index to $\alg_{MB}$.
        \ENDFOR
    \end{algorithmic}
\end{algorithm}

Note that even though this reduction is the same as the reduction suggested by \citet{saha2018battle} for stochastic multi-dueling bandits, our novel contribution is the lemma below which shows that Algorithm \ref{alg:reduction} preserves the expected regret for any arbitrary sequence of preference matrices.

\begin{restatable}{lemma}{LemLowerBound}
\label{lem:LowerBound}
Using $\alg_{DB}$ given in Algorithm \ref{alg:reduction}, $$\bE[\regret_T(\alg_{DB})] = \regret_T(\alg_{MB}),$$
for any arbitrary sequence of preference matrices $\pref_1, \pref_2, \dots, \pref_T$.
\end{restatable}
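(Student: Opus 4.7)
My plan is to prove the claim in two steps. First, I would show that the reduction is \emph{feedback-equivalent} to the multi-dueling setting from $\alg_{MB}$'s point of view, i.e., the winner index $\winner_t$ that $\alg_{DB}$ returns to $\alg_{MB}$ is distributed according to $\WinBattle_t(\cdot \mid \SubsetAt_t)$. Second, conditional on the trajectory $(\SubsetAt_t)_{t=1}^T$, I would exploit the marginal uniformity of $i_t$ and $j_t$ to match the expected dueling regret to the multi-dueling regret round by round.

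For the feedback-equivalence step, fix $t$ and any $k \in [m]$ and condition on $\SubsetAt_t$. The event $\{\winner_t = k\}$ is exactly $\{i_t = k,\, w_t = 1\} \cup \{j_t = k,\, w_t = 0\}$. Since $(i_t,j_t)$ is uniform over the $m(m-1)$ ordered pairs of distinct indices, and $w_t$ is Bernoulli with parameter $\pref_t(\SubsetAt_t(i_t),\SubsetAt_t(j_t))$, a direct calculation using the skew-symmetry $1 - \pref_t(b,a) = \pref_t(a,b)$ gives
$$
\bP(\winner_t = k \mid \SubsetAt_t) \;=\; \sum_{l \neq k} \frac{2\,\pref_t(\SubsetAt_t(k),\SubsetAt_t(l))}{m(m-1)} \;=\; \WinBattle_t(k \mid \SubsetAt_t).
$$
Hence the joint law of $(\SubsetAt_t, \winner_t)_{t=1}^T$ inside $\alg_{DB}$ coincides with the law obtained when $\alg_{MB}$ interacts directly with the multi-dueling environment, and in particular the marginal distribution of $(\SubsetAt_t)_{t=1}^T$ is the same in both regimes.

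For the regret-matching step, the arms actually pulled by $\alg_{DB}$ at round $t$ are $\SubsetAt_t(i_t)$ and $\SubsetAt_t(j_t)$, so
$$
\regret_T(\alg_{DB}) \;=\; \sum_{t=1}^T \Bigl[ b_t(i^*) - \tfrac{1}{2}\bigl(b_t(\SubsetAt_t(i_t)) + b_t(\SubsetAt_t(j_t))\bigr) \Bigr].
$$
Although $i_t$ and $j_t$ are sampled \emph{without} replacement (and hence are not independent), each has marginal uniform distribution on $[m]$; linearity of expectation then yields
$$
\bE\!\left[ \tfrac{1}{2}\bigl(b_t(\SubsetAt_t(i_t)) + b_t(\SubsetAt_t(j_t))\bigr) \,\Big|\, \SubsetAt_t \right] \;=\; \frac{1}{m} \sum_{i \in \SubsetAt_t} b_t(i).
$$
Summing over $t$ and taking the outer expectation over the trajectory (whose law was matched in step one) produces the stated equality $\bE[\regret_T(\alg_{DB})] = \regret_T(\alg_{MB})$.

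The only subtlety I anticipate is the without-replacement sampling: one must be careful not to treat $(i_t,j_t)$ as independent and instead invoke only the marginal uniformity of each coordinate, which is all linearity of expectation requires. Beyond this, the proof is a standard combination of the tower property and the definitions of the two regret notions, so I do not expect any further technical hurdles.
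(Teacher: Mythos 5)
Your proof is correct, and its core step---matching the expected instantaneous dueling regret to the multi-dueling regret via the uniform sampling of $(i_t,j_t)$---is the same as the paper's. Two points of difference are worth noting. First, for the expectation $\bE\bigl[\tfrac{1}{2}(b_t(\SubsetAt_t(i_t)) + b_t(\SubsetAt_t(j_t)))\mid \SubsetAt_t\bigr]$ you invoke only the marginal uniformity of each coordinate plus linearity, whereas the paper expands the sum over all $m(m-1)$ ordered pairs and simplifies algebraically to reach $\frac{1}{m}\sum_{i=1}^m b_t(\SubsetAt_t(i))$; your route is shorter and correctly flags that independence of $i_t$ and $j_t$ is never needed. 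Second, your feedback-equivalence step---verifying that $\bP(\winner_t = k \mid \SubsetAt_t) = \WinBattle_t(k \mid \SubsetAt_t)$ using the skew-symmetry of $\pref_t$---is not spelled out in the paper's proof at all, yet it is genuinely needed: without it one cannot conclude that the trajectory $(\SubsetAt_t)_{t=1}^T$ generated by $\alg_{MB}$ inside the reduction has the same law as in a direct multi-dueling interaction, which is what licenses identifying $\sum_t r_t(\alg_{MB})$ with $\regret_T(\alg_{MB})$. So your write-up is, if anything, more complete than the paper's; the only thing to double-check is that the sampling of $(i_t,j_t)$ in Algorithm 3 is indeed over ordered pairs (probability $\frac{1}{m(m-1)}$ each), which is the reading both you and the paper rely on.
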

The complete proof can be found in the Appendix. Here we provide a brief outline of the proof. \\
\textit{Proof Outline}.\\
Let $r_t(\alg_{DB}) \defined b_t(i^*) - \frac{b_t \bigl(\SubsetAt_t(i_t)\bigr) + b_t\bigl(\SubsetAt_t(j_t)\bigr)}{2}$ be the instantaneous regret of $\alg_{DB}$ at round $t$. Correspondingly, let $r_t(\alg_{MB}) \defined  b_t(i^*) - \frac{1}{m} \left[ \sum_{i=1}^{m} b_t \Bigl(\SubsetAt_t(i)\Bigr)\right]$ be the instantaneous regret of $\alg_{MB}$ at round $t$. Firstly, we show that 
$\bE_{i_t, j_t  \overset{Unif}{\sim} [m], i_t \neq j_t} [r_t (\alg_{DB})] = r_t(\alg_{MB})$. 
Then,
\begin{align*}
    \bE[\regret_T(\alg_{DB})] = \sum_{t=1}^{T} \bE[r_t (\alg_{DB})] &=  \sum_{t=1}^{T} r_t(\alg_{MB}) \\
    &= \regret_T(\alg_{MB}).
\end{align*}
\qed

Using the above reduction and Lemma \ref{lem:LowerBound}, along with the lower bound proved for adversarial dueling bandits \cite{pmlr-v139-saha21a}[Theorem 16], we can state the following lower bound for the expected regret of adversarial multi-dueling bandits measured against a Borda winner. 
\begin{theorem}
    \label{thm:LowerBound}
    For any learning algorithm $\alg$, there exists an instance of adversarial multi-dueling bandits with $T\geq K$, $K \geq 4$ and a sequence of preferences $\pref_1, \pref_2, \dots, \pref_T$, such that the expected regret of $\alg$ for that instance is at least $\Omega( K^{1/3} \, T^{2/3})$. 
\end{theorem}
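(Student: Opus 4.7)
The plan is to prove this lower bound via a reduction argument, leveraging Algorithm \ref{alg:reduction} and Lemma \ref{lem:LowerBound} to transport a known adversarial dueling-bandits lower bound into the multi-dueling setting. Specifically, I fix an arbitrary adversarial multi-dueling learning algorithm $\alg$ and use it as the subroutine $\alg_{MB}$ inside Algorithm \ref{alg:reduction} to obtain an induced dueling-bandits algorithm $\alg_{DB}$. This $\alg_{DB}$ is a bona fide adversarial dueling-bandits learner: at each round it outputs a pair of arms in $[K]$, receives only the Bernoulli preference feedback $w_t \sim \textsc{Bernoulli}(\pref_t(\SubsetAt_t(i_t), \SubsetAt_t(j_t)))$, and uses its own internal randomization over $(i_t, j_t)$, which is standard.

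Next I invoke the $\Omega(K^{1/3} T^{2/3})$ lower bound for adversarial dueling bandits against a Borda-winner benchmark (Theorem 16 of \citet{pmlr-v139-saha21a}): for this particular $\alg_{DB}$, there exists a sequence $\pref_1, \dots, \pref_T$, with $T \geq K$ and $K \geq 4$, on which $\bE[\regret_T(\alg_{DB})] = \Omega(K^{1/3} T^{2/3})$. I then feed the same preference sequence into the adversarial multi-dueling environment facing $\alg = \alg_{MB}$. By Lemma \ref{lem:LowerBound}, which guarantees $\bE[\regret_T(\alg_{DB})] = \regret_T(\alg_{MB})$ for arbitrary (and in particular adversarial) $\pref_1, \dots, \pref_T$, the multi-dueling regret of $\alg$ on this instance is also $\Omega(K^{1/3} T^{2/3})$, proving the theorem.

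The heavy lifting is already done by Lemma \ref{lem:LowerBound}, so there is no truly hard step left. The main thing to verify is compatibility: the dueling-bandits lower bound of \citet{pmlr-v139-saha21a} must be applicable to $\alg_{DB}$ in the regime $T \geq K$, $K \geq 4$ (which matches the hypotheses of Theorem \ref{thm:LowerBound}), and it must measure regret against the same Borda-winner benchmark used throughout our paper. Both hold: Theorem 16 of \citet{pmlr-v139-saha21a} is stated for the Borda regret against any (possibly randomized) dueling-bandits algorithm under an obliviously chosen preference sequence, which is precisely the setting $\alg_{DB}$ occupies. A mild point worth spelling out in the writeup is that the internal randomization of $\alg_{DB}$ (uniformly sampling the pair of indices $(i_t, j_t)$ from $[m]$ without replacement) is absorbed into the outer expectation, so the bound $\bE[\regret_T(\alg_{DB})] = \Omega(K^{1/3} T^{2/3})$ is exactly what Lemma \ref{lem:LowerBound} converts into $\regret_T(\alg_{MB}) = \Omega(K^{1/3} T^{2/3})$.
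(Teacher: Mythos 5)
Your proposal is correct and follows essentially the same route as the paper: instantiate the reduction of Algorithm \ref{alg:reduction} with the given multi-dueling algorithm as $\alg_{MB}$, invoke Theorem 16 of \citet{pmlr-v139-saha21a} to obtain a hard preference sequence for the induced dueling algorithm $\alg_{DB}$, and transfer the $\Omega(K^{1/3}T^{2/3})$ bound back via Lemma \ref{lem:LowerBound}. Your added remark that the uniform sampling of $(i_t,j_t)$ is absorbed into the outer expectation is a correct and worthwhile clarification of the regret-preservation step.
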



\section{Concluding Remarks}
In conclusion, we have introduced and formalized the problem of regret minimization in adversarial multi-dueling bandits, extending previous research on multi-armed bandits with preference feedback. Our work addresses a gap in the literature by considering scenarios where the learner selects multiple arms at each round and observes the identity of the most preferred arm, based on arbitrary preference matrices. Central to our contribution is the development of a novel algorithm, \PropAlg, tailored to learn from preference feedback following a pairwise-subset choice model. Through rigorous analysis, we have demonstrated that \PropAlg \ achieves near-optimal performance in terms of its expected cumulative regret measured against a Borda winner. Specifically, our upper bound on the expected cumulative regret of \PropAlg \ is of the order $O((K \log K)^{1/3} \, T^{2/3})$. We also prove a matching lower bound of $\Omega(K^{1/3} \, T^{2/3})$, thereby demonstrating the near-optimality of our proposed algorithm up to a logarithmic factor. Future research directions include conducting high-probability regret analysis and exploring the dynamic regret objective with respect to a time-varying benchmark. Another valuable direction would be to investigate alternative notions for optimal arm and subset choice models.
It would also be advantageous to develop a meta-algorithm for multi-dueling bandits which can make use of the corresponding algorithm for dueling bandits as a black-box leading us to incorporate the advancements in dueling bandits into multi-dueling bandits as done for other problems (e.g., \citet{ijcai2023p413}). 

\section*{Acknowledgements}
This work is supported by the Dutch Research Council (NWO) in the framework of EDIC project (grant number 628.011.021). 

\section*{Impact Statement}
This article presents work whose goal is to advance the field of reinforcement learning theory. There are many potential societal consequences of our work, none of which we feel must be specifically highlighted here.

\bibliography{main}
\bibliographystyle{icml2024}

\newpage
\appendix
\onecolumn

\section{Proof of Lemma \ref{lem:ExpG}}
\LemmaExpG*
\begin{proof} 
    \begin{itemize}
        \item[] \textbf{Case $1$: $m$ is even.}
            \begin{equation}
                \label{eq:ExpGCase1Eq1}
                \bE[g(m,o_t,x_t)] = \frac{ \bE[\ind(o_t = x_t)] - \frac{(m-2)}{4(m-1)}}{\frac{m}{2(m-1)}}
            \end{equation}
            Using the construction of $\SubsetAt_t$, one can write
            \begin{align}
                \bE[\ind(o_t = x_t)] &= \sum_{i=1}^{m/2} \WinBattle_t(i \given \SubsetAt_t) \nonumber \\ 
                                    &= \frac{m}{2} \left( 2 \frac{\left(\frac{m}{2}-1\right) \pref_t(x_t, x_t) + \frac{m}{2}\pref_t(x_t, y_t)}{m(m-1)} \right) \nonumber \\
                                    &= \frac{\left( \left(\frac{m}{2}-1\right) \frac{1}{2}\right)}{m-1} + \frac{m}{2(m-1)} \pref_t(x_t, y_t) \nonumber \\
                                    &= \frac{m}{2(m-1)} \pref_t(x_t, y_t) + \frac{m-2}{4(m-1)}.
                                    \label{eq:ExpGCase1Eq2}
            \end{align} 
            In the above, the second equality follows from the definition of $\WinBattle_t(i \given \SubsetAt_t)$ and \cite{saha2018battle}[Lemma 1].
            Substituting Eq. \eqref{eq:ExpGCase1Eq2} in Eq. \eqref{eq:ExpGCase1Eq1}, we get
            \begin{equation*}
                 \bE[g(m,o_t,x_t)] = \pref_t(x_t, y_t).
            \end{equation*}
        \item[] \textbf{Case $2$: $m$ is odd.} \\
        We proceed on similar lines to Case 1.
        \begin{equation}
                \label{eq:ExpGCase2Eq1}
                \bE[g(m,o_t,x_t)] = \frac{ \bE\left[\ind(o_t = x_t)\right] - \frac{(m-1)}{4m}}{\frac{m+1}{2m}}
            \end{equation}
         Using the construction of $\SubsetAt_t$, one can write
            \begin{align}
                \bE[\ind(o_t = x_t)] &= \frac{1}{2} \sum_{i=1}^{(m-1)/2} \WinBattle_t(i \given \SubsetAt_t) + \frac{1}{2} \sum_{i=1}^{(m+1)/2} \WinBattle_t(i \given \SubsetAt_t) \nonumber \\
                &= \frac{1}{2} \ \left(\frac{m-1}{2}\right) \ \left( 2 \frac{\left(\frac{m-1}{2}-1\right) \pref_t(x_t, x_t) + \frac{m+1}{2}\pref_t(x_t, y_t)}{m(m-1)} \right) \nonumber \\
                &\qquad + \frac{1}{2} \ \left(\frac{m+1}{2}\right) \ \left( 2 \frac{\left(\frac{m+1}{2}-1\right) \pref_t(x_t, x_t) + \frac{m-1}{2}\pref_t(x_t, y_t)}{m(m-1)} \right) \nonumber \\
                &=   \left(\frac{1}{2}\right) \ \left( \frac{\left(\frac{m-1}{2}-1\right) \frac{1}{2} + \frac{m+1}{2}\pref_t(x_t, y_t)}{m} \right) \nonumber \\
                &\qquad + \left(\frac{m+1}{2}\right) \ \left( \frac{\left(\frac{m+1}{2}-1\right) \frac{1}{2} + \frac{m-1}{2}\pref_t(x_t, y_t)}{m(m-1)} \right) \nonumber \\
                &=  \left(\frac{1}{2}\right) \ \left( \frac{\left(\frac{m-3}{4}\right) + \frac{m+1}{2}\pref_t(x_t, y_t)}{m} \right) + \left(\frac{m+1}{2}\right) \ \left( \frac{\left(\frac{m-1}{4}\right) + \frac{m-1}{2}\pref_t(x_t, y_t)}{m(m-1)} \right) \nonumber \\
                & = \frac{m-3}{8m} +  \left(\frac{m+1}{2}\right)  \frac{(m-1)}{4 m (m-1)} \nonumber + \left(\frac{m+1}{4m}\right) \pref_t(x_t, y_t) + \left(\frac{m+1}{2}\right) \frac{m-1}{2 m (m-1)} \pref_t(x_t, y_t) \nonumber \\
                & = \frac{m-3}{8m} +  \frac{m+1}{8m}  +  \left(\frac{m+1}{4m}\right) \pref_t(x_t, y_t) +  \left(\frac{m+1}{4m}\right) \pref_t(x_t, y_t) \nonumber \\
                &=  \left(\frac{m+1}{2m}\right) \pref_t(x_t, y_t) + \frac{m-1}{4m}  \label{eq:ExpGCase2Eq2}.
            \end{align}
             In the above, the second equality follows from the definition of $\WinBattle_t(i \given \SubsetAt_t)$ and \cite{saha2018battle}[Lemma 1].
             Substituting Eq. \eqref{eq:ExpGCase2Eq2} in Eq. \eqref{eq:ExpGCase2Eq1}, we get
            \begin{equation*}
                 \bE[g(m,o_t,x_t)] = \pref_t(x_t, y_t).
            \end{equation*}
    \end{itemize}
\end{proof}

\section{Proof of Lemma \ref{lem:ExpScore}}
\LemmaExpScore*
\begin{proof}
    \begin{align*}
        \bE[\hat{s}_t(i)] &= \bE_{\HistoryAt_t} \left[ \frac{\ind(i=x_t)}{K\ q_t(i)} \sum_{j \in [K]} \frac{\ind(j=y_t) \ g(m,o_t,x_t) }{q_t(j)} \right]  \\
        &= \frac{1}{K} 
            \left( 
                \bE_{\HistoryAt_{t-1}}
                    \left[\bE_{(x_t, y_t, o_t)}
                        \left[
                            \frac{\ind{(i=x_t)}}{q_t(i)}
                            \sum_{j \in [K]} \frac{\ind(j=y_t) \ g(m,o_t,x_t)}{q_t(j)}
                            \given[\bigg]\HistoryAt_{t-1}  
                        \right]
                    \right]
            \right) \\
        &= \frac{1}{K} 
            \left( 
                \bE_{\HistoryAt_{t-1}}
                    \left[\bE_{x_t}
                        \left[
                            \frac{\ind{(i=x_t)}}{q_t(i)} 
                                    \sum_{j \in [K]} \bE_{y_t} 
                                    \Biggl[ 
                                        \frac{\ind(j=y_t) \ \bE_{o_t} \left[g(m,o_t,x_t) \given o_t\right]}{q_t(j)}  \given[\bigg] x_t 
                                    \Biggr]
                            \given[\Bigg]\HistoryAt_{t-1}  
                        \right]
                    \right]
            \right) \\
          &= \frac{1}{K}
            \left( 
                \bE_{\HistoryAt_{t-1}}
                \left[\bE_{x_t}
                    \left[
                        \frac{\ind{(i=x_t)}}{q_t(i)}  
                        \sum_{j \in [K]}
                        \bE_{y_t}
                        \Biggl[ 
                            \frac{\ind{(j=y_t)}  \  \pref_t(x_t, y_t) }{q_t(j)} 
                            \given[\bigg] x_t
                        \Biggr]
                        \given[\Bigg]\HistoryAt_{t-1}
                    \right]
                \right]
            \right) \\        
          &= \frac{1}{K}
            \left( 
                \bE_{\HistoryAt_{t-1}}
                \left[\bE_{x_t}
                    \left[
                        \frac{\ind{(i=x_t)}}{q_t(i)}  
                        \sum_{j \in [K]}
                        \sum_{j' \in [K]}
                        \left[
                            \frac{\ind{(j=j')}  \  \pref_t(x_t, j') \ q_t(j') }{q_t(j)} 
                        \right]
                        \given[\bigg]\HistoryAt_{t-1}
                    \right]
                \right]
            \right) \\ 
        &= \frac{1}{K}
            \left( 
                \bE_{\HistoryAt_{t-1}}
                \left[\bE_{x_t}
                    \left[
                        \frac{\ind{(i=x_t)}}{q_t(i)}  
                        \sum_{j \in [K]}
                            \pref_t(x_t, j)  
                        \given[\bigg]\HistoryAt_{t-1}
                    \right]
                \right]
            \right) \\
         &= \frac{1}{K}
            \left( 
                \bE_{\HistoryAt_{t-1}}
                \left[ \sum_{i' \in [K]}
                    \left[
                        \frac{\ind{(i=i')} q_t(i')}{q_t(i)}  
                        \sum_{j \in [K]}
                            \pref_t(i', j)  
                    \right]
                \right]
            \right) \\
         &= \frac{1}{K}
                \sum_{j \in [K]}
                \pref_t(i, j) \\
        &= s_t(i).
    \end{align*}
   In the above, the fourth equality is due to Lemma \ref{lem:ExpG}. Moreover, the fifth equality and the seventh equality use that $x_t, y_t \sim q_t$ iid with replacement.  
\end{proof}

\section{Proof of Lemma \ref{lem:GUpperBound}}
\LemmaGUpperBound*
\begin{proof}
\textbf{Case 1: $m$ is even.}
            \begin{align*}
                g(m,o_t,x_t) &\leq  \frac{ 1 - \frac{(m-2)}{4(m-1)}}{ \frac{m}{2(m-1)}} \\
                &=  \frac{ \left( \frac{4(m-1) - (m-2)}{4(m-1)}\right)}{  \frac{m}{2(m-1)}} \\
                &= \left( \frac{(3m-2)}{4 (m-1)} \right) \left( \frac{2(m-1)}{m} \right) \\
                &= \frac{(3m-2)}{2 m} 
            \end{align*}
\textbf{Case 2: $m$ is odd.}
         \begin{align*}
                g(m,o_t,x_t) &\leq  \frac{1 - \frac{(m-1)}{4m}}{\frac{m+1}{2m}} \\
                            &= \left( \frac{4m - m + 1}{4m}  \right) \left( \frac{2m}{m+1} \right) \\
                            &= \frac{3m+1}{2(m+1)}
            \end{align*}     
    
    For $m\geq 2$, 
    \begin{equation*}
        \frac{(3m-2)}{2 m} < \frac{3m+1}{2(m+1)}.
    \end{equation*}
\end{proof}

\section{Proof of Lemma \ref{lem:ScoreUpperBound}}
\LemmaScoreUpperBound*
\begin{proof}
    From the definition of $q_t$ given in \eqref{def:WeightUpdate}, it can be seen that, for all $t \in [T]$ 
    and $i \in [K]$,
    \begin{equation*}
        q_t(i) \geq \frac{\gamma}{K}.
    \end{equation*}
    Using the above along with the definition of $\hat{s}_t$ given in Eq. \eqref{def:SBordaEstimate} and Lemma \ref{lem:GUpperBound}, it can be seen that, for all $t \in [T]$ 
    and $i \in [K]$,
    \begin{equation*}
        \hat{s}_t(i) \leq \frac{(3/2)}{(K) \ (\gamma/K)^2} = \frac{3K}{2\gamma^2}.
    \end{equation*}
    Then, using $\gamma \geq  \sqrt{3\eta K/2}$ and the above inequality,  
    \begin{align*}
        \eta \hat{s}_t(i) \leq \frac{3\eta K}{2\gamma^2} = 1.
    \end{align*}
    Furthermore, $0 \leq \eta \hat{s}_t(i)$ also holds using the definition of $\hat{s}_t(i)$ given in Eq. \eqref{def:SBordaEstimate}, Lemma \ref{lem:GUpperBound} and $\eta > 0$. 
    \paragraph{}
\end{proof}
\section{Proof of Lemma \ref{lem:ExpWeightTScore}}
\LemmaExpWeightTScore*
\begin{proof}
    \begin{align*}
        \bE_{\HistoryAt_t} \left[q_t^{\transpose} \ \hat{s}_t \right] 
            &=  \bE_{\HistoryAt_t} \left[ \sum_{i=1}^{K} q_t(i) \ \hat{s}_t(i) \right] 
            = \bE_{\HistoryAt_{t-1}} \left[ \sum_{i=1}^{K} \ q_t(i) \ \bE_{x_t, y_t, o_t}\left[ \hat{s}_t(i) \given[\big] \HistoryAt_{t-1} \right] \right] 
        \\
            &= \bE_{\HistoryAt_{t-1}} \left[ \sum_{i=1}^{K} q_t(i) \ s_t(i) \right] = \bE_{\HistoryAt_{t-1}} \bigl[ \bE_{i \sim q_t} \left[ s_t(i)  \given \HistoryAt_{t-1} \right] \bigr].
    \end{align*}
    In the above, the third equality follows from the proof of Lemma \ref{lem:ExpScore}.
\end{proof}

\section{Proof of Lemma \ref{lem:ExpWeightTScoreSquared}}

\LemmaExpWeightTScoreSquared*
\begin{proof}
    \begin{alignat*}{2}
        & \bE\left[ \sum_{i=1}^{K} q_t(i) \ \hat{s}_t(i)^2 \right] \\
        &= \bE_{\HistoryAt_{t-1}} 
            \left[ \sum_{i=1}^{K} q_t(i) \ 
                \bE_{\left(x_t, y_t, a_t\right)}
                \Biggl[ \frac{\ind(i=x_t)}{K \ q_t(i)} \sum_{j \in [K]} \frac{\ind(j=y_t) \, g(m,o_t,x_t)}{q_t(j)} \given[\bigg] \HistoryAt_{t-1} \Biggr]^2
            \right] \\
          &= \frac{1}{K^2} 
          \left( \bE_{\HistoryAt_{t-1}} 
                    \left[ \sum_{i=1}^{K} \frac{q_t(i)}{q_t(i)^2} 
                    \bE_{(x_t, y_t)} \left[ \sum_{j \in [K]}  \frac{\ind(i=x_t) \, \ind(j=y_t) \, \bE_{o_t}[g^2(m,o_t,x_t) \given x_t, y_t]}{q_t(j)^2} \given[\Bigg] \HistoryAt_{t-1} \right] \right] \right) \\
         &\leq \frac{(3m+1)^2}{4(m+1)^2 K^2} 
          \left( \bE_{\HistoryAt_{t-1}} 
                    \left[ \sum_{i=1}^{K} \frac{1}{q_t(i)} 
                \left[ \sum_{j \in [K]}  \frac{  \bE_{x_t} [\ind(i=x_t)] \, \bE_{y_t}[\ind(j=y_t)]}{q_t(j)^2} \given[\Bigg] \HistoryAt_{t-1} \right] \right] \right) \qquad \text{(using Lemma \ref{lem:GUpperBound})} \\
        &= \frac{(3m+1)^2}{4(m+1)^2 K^2}  
            \left( 
                \bE_{\HistoryAt_{t-1}}
                \left[
                     \sum_{i=1}^{K} \frac{1}{q_t(i)} \left[ \sum_{j \in [K]} \frac{q_t(i) \ q_t(j)}{q_t(j)^2} \right]
                \right]
            \right)  \\
         &= \frac{(3m+1)^2}{4(m+1)^2 K^2}  
            \left( 
                \bE_{\HistoryAt_{t-1}}
                    \left[ K \sum_{j \in [K]} \frac{1}{q_t(j)} \right]
            \right) \\
        &\leq  \frac{(3m+1)^2}{4(m+1)^2 K}  
            \left( 
                \sum_{j \in [K]} \frac{1}{\gamma/K} 
            \right) 
                \qquad (\because \forall i' \in [K] \text{ and } \forall t \in [T], q_t(i') \geq \gamma/K \text{ using Eq. \eqref{def:WeightUpdate}}) \\
         &=  \frac{(3m+1)^2}{4(m+1)^2 K}  
            \left( 
               \frac{K}{\gamma/K}
            \right) \\
         &=  \frac{(3m+1)^2}{4(m+1)^2} \, \frac{K}{\gamma}.  
    \end{alignat*}
\end{proof}

\section{Proof of Lemma \ref{lem:Sampling}}
\LemmaSampling*

\begin{proof}
    \begin{align*}
         \bP\Bigl(\SubsetAt_t(j) = i\Bigr) &= \bP(\SubsetAt_t(j) = x_t) \, \bP(x_t = i) + \bP(\SubsetAt_t(j) = y_t) \, \bP(y_t = i) \\
         &=  \bP(\SubsetAt_t(j) = x_t) \, q_t(i) + \bP(\SubsetAt_t(j) = y_t) \, q_t(i) \\
         &= q_t(i).
    \end{align*}
\end{proof}

\section{Proof of Lemma \ref{lem:LowerBound}}
\LemLowerBound*
\begin{proof}
Let $r_t(\alg_{DB})$ be the instantaneous regret of $\alg_{DB}$ at round $t$. Correspondingly, let $r_t(\alg_{MB})$ be the instantaneous regret of $\alg_{MB}$ at round $t$.
\begin{align*}
    &\bE_{i_t, j_t  \overset{Unif}{\sim} [m], i_t \neq j_t} [r_t (\alg_{DB})] \\
    &= b_t(i^*) - \frac{1}{2} \left[ \bE_{i_t, j_t  \overset{Unif}{\sim} [m], i_t \neq j_t} \biggl[ b_t \Bigl(\SubsetAt_t(i_t)\Bigr) + b_t\Bigl(\SubsetAt_t(j_t)\Bigr)\biggr] \right] \\
    &=  b_t(i^*) - \frac{1}{2} \left[ \sum_{i=1}^{m} \left( \frac{b_t \Bigl(\SubsetAt_t(i)\Bigr)}{m} + \frac{\sum_{j=1, j \neq i}^{m}b_t\Bigl(\SubsetAt_t(j)\Bigr)}{m(m-1)}  \right) \right]\\
    &=   b_t(i^*) - \frac{1}{2m} \left[  \sum_{i=1}^{m} \left(b_t \Bigl(\SubsetAt_t(i)\Bigr) + \frac{\sum_{j=1}^{m}b_t\Bigl(\SubsetAt_t(j)\Bigr) - b_t\Bigl(\SubsetAt_t(i)\Bigr)}{(m-1)}  \right) \right]\\
    &=   b_t(i^*) - \frac{1}{2m} \left[ \sum_{i=1}^{m} \left(b_t \Bigl(\SubsetAt_t(i)\Bigr) - \frac{ b_t \Bigl(\SubsetAt_t(i)\Bigr)}{(m-1)}\right) + \frac{m \sum_{j=1}^{m}b_t\Bigl(\SubsetAt_t(j)\Bigr)}{(m-1)}  \right]\\
    %
    %
    &=   b_t(i^*) - \frac{1}{2m}  \left[ \frac{(m-2)}{(m-1)} \sum_{i=1}^{m} b_t \Bigl(\SubsetAt_t(i)\Bigr) + \frac{m \sum_{i=1}^{m}b_t\Bigl(\SubsetAt_t(i)\Bigr)}{(m-1)}  \right]\\
    &=   b_t(i^*) - \frac{1}{2m} \left[ \frac{\sum_{i=1}^{m} b_t \Bigl(\SubsetAt_t(i)\Bigr)}{(m-1)} \, (m -2 + m)  \right]\\
    &=   b_t(i^*) - \frac{1}{2m} \left[ \frac{\sum_{i=1}^{m} b_t \Bigl(\SubsetAt_t(i)\Bigr)}{(m-1)} \, (2m-2)  \right]\\
    &=   b_t(i^*) - \frac{1}{m} \left[ \sum_{i=1}^{m} b_t \Bigl(\SubsetAt_t(i)\Bigr)\right] \\
    & = r_t(\alg_{MB}).
\end{align*}
In the above, the second equality uses that $i_t, j_t$ are sampled uniformly at random from $[m]$ without replacement. 

Then,
\begin{align*}
    \bE[\regret_T(\alg_{DB})] = \sum_{t=1}^{T} \bE[r_t (\alg_{DB})] =  \sum_{t=1}^{T} r_t(\alg_{MB}) = \regret_T(\alg_{MB}).
\end{align*}
\end{proof}
\end{document}